\def \y {\mathbf{y}}
\def \E {\mathrm{E}}
\def \x {\mathbf{x}}
\def \bv {\mathbf{v}}
\def \D {\mathcal{D}}
\def \u {\mathbf{u}}
\def \w {\mathbf{w}}
\def \R {\mathbb{R}}
\def \m {\mathbf{m}}
\def \F {\mathcal{F}}
\def \B {\mathbf{B}}
\def \m {\mathbf{m}}
\def \T {\mathcal{T}}
\def \F {\mathcal{F}}
\def \B {\mathcal{B}}
\def \S {\mathcal{S}}
\def\blfootnote{\gdef\@thefnmark{}\@footnotetext}
\newtheorem{ass}{Assumption}
\newtheorem{tho}{Theorem}
\newtheorem{lemma}{Lemma}
\newtheorem{Remark}{Remark}
\begin{document}
\title{Momentum Accelerates the Convergence of \\ Stochastic AUPRC Maximization}
\author{\name Guanghui Wang \email wanggh@lamda.nju.edu.cn\\
       \addr National Key Laboratory for Novel Software Technology,
       Nanjing University, Nanjing 210023, China\\
       \name Ming Yang \email yangming@mail.hfut.edu.cn\\
       \addr Hefei University of Technology, Hefei 230000, China\\
       \name Lijun Zhang \email zhanglj@lamda.nju.edu.cn\\
       \addr National Key Laboratory for Novel Software Technology, 
       Nanjing University, Nanjing 210023, China\\
       \name Tianbao Yang \email tianbao-yang@uiowa.edu\\
       \addr Department of Computer Science, the University of Iowa, Iowa City, IA 52242, USA
       }

\maketitle

\begin{abstract}
\noindent\ignorespacesafterend In this paper, we study stochastic optimization of areas under precision-recall curves (AUPRC), which is widely used for combating imbalanced classification tasks.  Although a few methods have been proposed for maximizing AUPRC, stochastic optimization of AUPRC with convergence guarantee remains an undeveloped territory. A state-of-the-art complexity is $O(1/\epsilon^5)$ for finding an $\epsilon$-stationary solution. In this paper, we further improve the stochastic optimization of AURPC by (i) developing novel stochastic momentum methods with a better iteration complexity of $O(1/\epsilon^4)$ for finding an $\epsilon$-stationary solution; and (ii) designing a novel family of stochastic adaptive methods with the same iteration complexity, which enjoy faster convergence in practice. To this end, we propose two innovative techniques  that are critical for improving the convergence: (i) the biased estimators for tracking individual ranking scores are updated in a randomized coordinate-wise manner; and (ii)  a momentum update is used on top of the stochastic gradient estimator for tracking the gradient of the objective. The novel analysis of Adam-style updates is also one main contribution.   Extensive experiments on various data sets demonstrate the effectiveness of the proposed algorithms. Of independent interest, the proposed stochastic momentum and adaptive algorithms are also applicable to a class of two-level stochastic dependent compositional optimization problems. 
\end{abstract}

\section{INTRODUCTION} 
In supervised \blfootnote{This work has been accepted by AISTATS'22.} machine learning systems, the performance metrics 
used for model evaluation play a vital role 
\citep{ferri2009experimental}. Traditional machine learning 
algorithms typically employ  \emph{accuracy} (proportion of correctly predicted examples) as the measure of performance, which is a natural choice when the classes of data are balanced. However, in many real-world applications, such as activity recognition \citep{gao2016adaptive}, and medical diagnosis \citep{krawczyk2016evolutionary}, the distribution of classes is highly skewed, for which the accuracy usually fails to characterize the hardness of the problem. In such cases, a more enhanced metric, named areas under the precision-recall curves (AUPRC), is proposed and commonly used to assess classifiers \citep{boyd2013area}. Over the past decades,  the superiority of AUPRC for \emph{evaluating} imbalanced classifiers has been witnessed by a long line of research \citep{davis2006relationship,clemenccon2009nonparametric,kok2010learning,boyd2012unachievable,flach2015precision}.


How to utilize AUPRC to \emph{facilitate algorithm design} remains a challenging  question. As observed by \cite{cortes2003auc}, maximizing accuracy  on training data does not necessarily lead to a satisfactory solution with maximal AUPRC. On the other hand, directly optimizing AUPRC is generally intractable due to the complicated integral operation. To mitigate this issue, most of the existing works seek to optimize certain estimator of AUPRC~\citep{qin2008a,DBLP:conf/eccv/BrownXKZ20,qi2021stochastic}. In this paper, we focus on maximizing \emph{average precision} (AP), which is  one of the most commonly used estimators in practice for the purpose of maximizing AUPRC. Given a training set $\D=\{(\x_i,y_i)\}_{i=1}^n$, AP is defined as \citep{boyd2013area}:
\begin{align}\label{eqn:AP}
\text{AP}=\frac{1}{m}\sum_{\x_i, y_i = 1}\frac{\text{r}^+(\x_i) }{\text{r}(\x_i)},
\end{align}
where $(\x_i, y_i=1)$ denotes a positive example, $\text{r}^+(\x_i)$ denotes its rank among all positive examples (i.e., the number of positive examples that are ranked higher than $\x_i$ including itself),  $\text{r}(\x_i)$ denotes its rank among all  examples (i.e., the number of examples that are ranked higher than $\x_i$ including itself), and $m$ denotes the total number of positive examples. It can be proved that AP is an unbiased estimator of AUPRC as the number of examples $n$ goes infinity. 

However, the optimization of AP is  challenging  due to the non-differential ranking functions $\text{r}^+(\x_i)$ and $\text{r}(\x_i)$ and the complicated form. Although a few studies have tried to optimize AP for AUPRC optimization~\citep{NIPS2006_af44c4c5,qin2008a,Henderson_2017,Mohapatra2018EfficientOF,Cakir_2019_CVPR,Rolinek_2020_CVPR,DBLP:conf/eccv/BrownXKZ20}, most of them are heuristic driven and do not provide any convergence guarantee.  Recently, \cite{qi2021stochastic} made a breakthrough towards optimizing a differentiable surrogate loss of AP with provable convergence guarantee. They cast the objective as a sum of non-convex compositional functions, and propose a principled stochastic method named SOAP for solving the special optimization problem. A key in their algorithmic design is to maintain and update biased estimators of the surrogate ranking functions 
for all positive data. Theoretical analysis show that  the iteration complexity of SOAP is on the order of $O(1/\epsilon^5)$.

However, it is still unclear whether faster rates than $O(1/\epsilon^5)$ can be obtained. Moreover, whether more advanced update rules such as momentum and  Adam~\citep{kingma2014adam} are useful to accelerate the convergence also remains an open question.     
This paper aims to address these problems, and makes the following contributions.   
\begin{itemize}
    \item We propose momentum-based methods to accelerate the convergence for solving the finite-sum  stochastic compositional optimization problem of AP maximization. The key idea is to employ a momentum update to compute a stochastic gradient estimator of the objective function. 
    
    \item We investigate two schemes for updating the  biased estimators  of the ranking functions, and establish a faster rate in the order of $O(1/\epsilon^4)$ for the iteration complexity. The first is similar to the one proposed by \cite{qi2021stochastic}. However, an improved rate of this scheme is difficult to establish unless the sampled positive data include all positive examples due to a subtle randomness issue. To address this limitation, we propose the second scheme  by updating them in a randomized coordinate-wise fashion. 
  
    \item We propose and analyze a family of adaptive algorithms by using different adaptive step sizes including the Adam-style step size. We establish the same order of iteration complexity by employing the second scheme mentioned above for updating the biased estimators of the ranking functions. 
    {\bf To the best our knowledge, this is the first time} the convergence of Adam-style methods for stochastic compositional problems is established in the literature.  A comparison between our convergence results and  the existing results for maximizing AP is presented in Table~\ref{tab:0}. 
    \item We conduct extensive experiments on benchmark datasets comparing with previous stochastic algorithms for AUPRC/AP optimization and verify the effectiveness of the proposed algorithms. 
\end{itemize}

\begin{table*}[t] 
	\caption{Comparison with previous results for maximizing AP or its surrogate loss. ``-'' indicate results not available or applicable.}\label{tab:0} 
	\vspace{0.5cm}
	\centering
	\label{tab:1}
	\scalebox{0.82}{\begin{tabular}{lcccc}
			\toprule
		 Method	&Provable Convergence&Adaptive Step Size&Iteration Complexity\\
		\hline 
		MOAP (this work)&Yes &No&$O(1/\epsilon^4)$\\
		ADAP (this work)&Yes &\makecell{Adam, AMSGrad, \\AdaGrad, Adabound, etc.}&$O(1/\epsilon^4)$\\
		\hline
	    SOAP (SGD-style)~\citep{qi2021stochastic}&Yes&No&$O(1/\epsilon^5)$\\
	    SOAP (AMSGrad-style)~\citep{qi2021stochastic}&Yes&AMSGrad&$O(1/\epsilon^5)$\\
	    SOAP (Adam-style)~\citep{qi2021stochastic}&No&Adam&-\\
		\bottomrule
	\end{tabular}}
\end{table*}

\section{RELATED WORK}

\paragraph{AUPRC/AP Optimization.}
Many studies in machine learning~\citep{yue2007support,10.5555/3104322.3104421,song2016training}, information retrieval~\citep{metzler2005markov,NIPS2006_af44c4c5,10.5555/2984093.2984129,qin2008a}, computer vision~\citep{Henderson_2017,Mohapatra2018EfficientOF,Cakir_2019_CVPR,Chen_2019_CVPR,NEURIPS2020_b2eeb736,Rolinek_2020_CVPR,DBLP:conf/eccv/BrownXKZ20} have investigated the issue of AUPRC/AP maximization.  Traditional machine learning studies are based on classical optimization techniques, such as hill climbing search \citep{metzler2005markov}, cutting-plane method \citep{yue2007support,10.5555/3104322.3104421}, and dynamic programming \citep{song2016training}. However, these methods do not scale well when the training set is large.  Many studies have considered various methods for computing an (approximiate) gradient for the original AP score function, e.g., finite difference estimators~\citep{Henderson_2017}, linear envelope estimators~\citep{Henderson_2017}, soft histogram binning technique~\citep{Cakir_2019_CVPR},  a blackbox differentiation of a combinatorial solver~\citep{Rolinek_2020_CVPR}, loss-augmented inference for estimating the semi-gradient~\citep{Mohapatra2018EfficientOF}, using the gradient of implicit cost functions~\citep{NIPS2006_af44c4c5}, using the gradient of a smooth approximation of AP~\citep{qin2008a,DBLP:conf/eccv/BrownXKZ20}, etc.  However, none of these studies provide any convergence guarantee when using these techniques for stochastic optimization of AUPRC/AP. 

Recently, \cite{eban2017scalable} propose a systematic framework for AUPRC optimization, which makes use of a finite set of samples to approximate the integral. They then cast the optimization as a min-max saddle-point problem, and optimize it by SGD-style methods without providing any convergence analysis. To the best of our knowledge, \citep{qi2021stochastic} is the first work that proposes a principled stochastic method for maximizing a smooth approximation of the AP function with provable convergence guarantee. 


\paragraph{AUROC Optimization.} Another territory related to AUPRC/AP maximiation is AUROC (aka areas under the ROC curves) maximization. Compared to AUPRC, AUROC is easier to optimize, and the problem of AUROC optimization have been investigated by a large number of previous studies \citep{herschtal2004optimising,joachims2005support,zhao2011online,gao2013one,liu2018fast,natole2018stochastic,ying2016stochastic,yuan2020robust,guo2020communication}. However, an algorithm that maximizes AUROC does not necessarily maximize AUPRC~\citep{davis06}. Hence, we will not directly compare with these studies in the present work.

\paragraph{Stochastic Compositional Optimization.}
The optimization problem  considered in this paper for AP maximization is closely related to stochastic compositional optimization (SCO), where the objective function is of the form $\E_{\xi_1}[f(\E_{\xi_2}[g(\w;\xi_2)];\xi_1)]$, and $\xi_1$ and $\xi_2$ are random variables. SCO has been extensively studied by previous  work \citep{wang2016accelerating,wang2017stochastic,lian2017finite,huo2018accelerated,lin2018improved,liu2018dualityfree,zhang2019stochastic,yang2019multilevel,balasubramanian2020stochastic,DBLP:journals/siamjo/GhadimiRW20,chen2021solving}. The major difference  is that the inner function in our objective depends on both the random variable of the inner level and that of the outer level, which makes the stochastic algorithm design and theoretical analysis much more involved. We notice that a recent work~\citep{hu2020biased} have considered a family of  stochastic compositional optimization problems, where the inner function depends on both the random variable of the inner level and that of the outer level. They proposed biased stochastic methods based on mini-batch averaging. However, their algorithms require a large mini-batch size in the order of $O(1/\epsilon^2)$ for finding an $\epsilon$-stationary solution, which is not practical,  and have a worse sample complexity in the order of $O(1/\epsilon^6)$ or $O(1/\epsilon^5)$. In contrast, our algorithms do not require a large mini-batch size and enjoy  a better sample complexity in the order of $O(m/\epsilon^4)$, where $m$ is the number of positive data points that is usually a moderate number for imbalanced data.

\section{MAIN RESULTS}
\textbf{Notation.} 
In this paper, we consider binary classification problems. Let $(\x,y)$ be a feature-label pair where $\x\in\R^{d}, \y\in\{1, -1\}$, $s_{\w}(\x): \R^d\mapsto \R$ be the score function of a classifier characterized by a parameter vector $\w\in\R^d$. Let $\D_+=\{(\x_1,y_1),\dots,(\x_{m},y_{m})\}$ denote a set of positive examples, and let $\D_-=\{(\x_{m+1}, y_{m+1}),\dots, (\x_{n},y_{n})\}$ denote a set of negative examples. The whole training set is denoted by  $\D=\D_+\cup\D_-$. We denote by $\|\cdot\|$ the Euclidean norm  of a vector. 
Let $\Pi_{\Omega}[\u]= \arg\min_{\bv\in\Omega}\|\bv - \u\|^2$ be the Euclidean projection onto a convex set $\Omega$.

\paragraph{Preliminaries.}
To tackle the non-differentiable ranking functions in~(\ref{eqn:AP}), a differentiable surrogate loss function can be used for approximating the ranking functions. Following~\cite{qi2021stochastic}, we consider the following approximation: 

\begin{align}\label{eqn:AAP}
\widehat{\text{AP}}=\frac{1}{m}\sum_{\x_i\in\D_+}\frac{\sum_{j=1}^n\ell(\w; \x_j, \x_i)\mathbb I(y_j=1) }{\sum_{j=1}^n\ell(\w; \x_j, \x_i)},
\end{align}
where $\ell(\w; \x_j, \x_i)$ is an surrogate function of the indicator function $\mathbb I(s_\w(\x_j)\geq s_\w(\x_i))$. In the literature, various surrogate functions $\ell(\w; \x_j, \x_i)$ have been considered~\citep{DBLP:conf/eccv/BrownXKZ20,qi2021stochastic,qin2008a}, including a squared hinge loss $\ell(\w; \x_j, \x_i )= \max(0, (s_\w(\x_j) - s_\w(\x_i)+ \gamma))^2$, and a logistic function $\ell(\w; \x_j, \x_i) = \frac{\exp(\gamma(s_\w(\x_j) - s_\w(\x_i)))}{1+\exp(\gamma(s_\w(\x_j) - s_\w(\x_i)))}$, where $\gamma$ is a margin or scaling parameter. 
Define the following notations: 
\begin{equation}
	g(\w;\x_j, \x_i)=
	\begin{gathered}
		\begin{bmatrix}
		g_1(\w;\x_j, \x_i)\\
		g_2(\w;\x_j, \x_i)		\end{bmatrix}
		\end{gathered}
	\begin{gathered}
		=\begin{bmatrix}
		\mathbb{I}(y_j=1)\ell(\w;\x_j, \x_i) \\
		\ell(\w;\x_j, \x_i)	
		\end{bmatrix}
	\end{gathered}
,\end{equation} 
and $g_i(\w)=\sum_{j=1}^{n}g(\w;\x_j, \x_i)\in\R^2$, and $f(\mathbf{u})=-\frac{{u}_1}{u_2}:\R^2\mapsto \R$ for any $\mathbf{u}=[u_1,u_2]^{\top}\in\R^2.$ We can see that the two coordinates of $g_i(\w)$ are the surrogates of the two ranking functions $r^+(\x_i)$ and $r(\x_i)$, respectively.  

Then, the optimization problem for AP maximization based on the surrogate function in~(\ref{eqn:AAP}) is equivalent to the following minimization problem:
\begin{equation}
\label{eqn:optimization:problem}
\min\limits_{\w}	 F(\w)=\E_{\x_i\sim\D_+}[f(g_i(\w))]= \frac{1}{m} \sum_{\x_i\in\D_+} f(g_i(\w)). 
\end{equation}
We refer  to the above problem as \emph{finite-sum two-level stochastic compositional optimization}.  We emphasize that a standard $\ell_2$ norm regularizer $\lambda \|\w\|^2$ can be added to the above objective to control over-fitting.  Note that the above problem is generally non-convex even if $g_i(\w)$ is convex. Hence, our goal for solving~(\ref{eqn:optimization:problem}) is to find a nearly stationary solution.  Throughout this paper, we denote by $ \tilde{g}_{i}(\w)$ and $ \nabla\tilde{g}_{i}(\w)$  independent unbiased stochastic estimators of $g_i(\w)$ and $\nabla g_i(\w)$, respectively, which are computed based on sampled (mini-batch) data from all data points in $\D$. For example, we can sample two sets of examples from $\D$ denoted by $\S_1$ and $\S_2$ and compute $ \tilde{g}_{i}(\w) = \frac{n}{|\S_1|}\sum_{\x_j\in\S_1}g(\w; \x_j, \x_i)$ and $ \nabla\tilde{g}_{i}(\w) = \frac{n}{|\S_2|}\sum_{\x_j\in\S_1}\nabla g(\w; \x_j, \x_i).$
To this end, we impose the following  assumptions~\citep{qi2021stochastic}. 
\begin{ass}
\label{ass:bounded:ell}
{Assume (i) there exist $C>0$ such that $\ell(\w;\x_i;\x_i)\geq C$, for any $\x_i\in\D_+$; (ii) $\ell(\w;\x_j;\x_i)$ is $C_{\ell}$-Lipschitz continuous and $L_{\ell}$-smooth with respect to $\w$ for any $\x_i\in\D_+$, $\x_j\in\D$, where $C_{\ell}, L_{\ell}>0$ are constants; (iii) it holds that  $0\leq \ell(\w;\x_j;\x_i)\leq M$ for some $M>0$ for any $\x_i\in\D_+$, $\x_j\in\D$. }
\end{ass}
\begin{ass}
\label{ass:bounded:g}
Assume there exists a positive constant $\sigma$, such that  $\|\nabla g(\w; \x_j, \x_i)\|^2\leq  \sigma^2$ for any  $
\x_i\in\D_+, \x_j\in \D$. 
\end{ass}
\begin{Remark} The above assumptions can be easily satisfied for a smooth surrogate loss function $\ell(\cdot; \x_j, \x_i)$ and a bounded score function $s_\w(\x)$. For example, for a linear model we can use $s_\w(\x)= 1/(1+\exp(-\w^{\top}\x))\in[0,1]$ as the score function. 
\end{Remark}

Based on Assumption \ref{ass:bounded:ell}, we can establish the smoothness of the objective function in the optimization problem \eqref{eqn:optimization:problem}, and show that 
$g_i(\w)\in\Omega=\{\u\in\R^2, u_1\leq Mm, C\leq u_2\leq Mn\}$ (cf. the supplement). 
Our goal is then to find a solution $\w$ such that $\|\nabla F(\w)\|\leq \epsilon$ in expectation, to which we refer as $\epsilon$-stationary solution.  


Before ending this section, we summarize the updates of SOAP algorithm~\citep{qi2021stochastic}  to facilitate comparisons. An essential component of SOAP is to compute an estimator of the gradient of the objective function, i.e.,   $\nabla F(\w)=\frac{1}{m}\sum_{i=1}^m \nabla g_i(\w_t)^{\top}\nabla f(g_i(\w_t))$ at each iteration. Since both $g_i(\w_t)$ and $\nabla g_i(\w_t) $ involves processing all examples,  stochastic estimators of these quantities are computed. In addition, due to the fact that $g_i(\w_t)$ is inside $\nabla f$  a better technique than a simple mini-batch averaging is used in order to control the variance. To this end, they introduce a sequence $U_t\in\R^{m\times 2}$ for tracking $g(\w) = [g_1(\w), \ldots, g_m(\w)]^{\top}\in\R^{m\times 2}$, and proposes the SOAP algorithm with the following  updates: 
\begin{equation}
\text{SOAP}\left\{
\begin{aligned}
&\text{Sample $B$ points from $\D_+$, denoted by $\B_t$}\\
&[U_{t+1}]_i =\left\{\begin{array}{ll} (1-\beta)[U_t]_i + \beta \tilde{g}_{i}(\w_t)^{\top} & i\in\B_t\\\
[U_t]_i & \text{o.w.}  \end{array}\right.\\
&\tilde{\nabla}_t=\frac{1}{B}\sum_{i\in{\B_t}}  \nabla\tilde{g}_{i}(\w_t)^{\top}\cdot \nabla f([U_{t+1}]^{\top}_i)\\
&\w_{t+1}=\w_t-\eta \tilde{\nabla}_t
\end{aligned}\right.
\end{equation}
where $[U_t]_i$ denotes the $i$-th row of $U_t$.   The  $U_{t+1}$ sequence is known as the moving average estimator in the literature of stochastic compositional optimization~\citep{wang2017stochastic}.   SOAP enjoys an iteration complexity of $O(1/\epsilon^5)$ for using the above update to find an $\epsilon$-stationary solution of the objective when $\B_t = \D_+$ and an iteration complexity of $O(m/\epsilon^5)$ when $|\B_t| =O(1)$. Below, we present novel algorithms to improve these complexities.

 \subsection{Stochastic Momentum Methods for AP Maximization: MOAP} 
 \begin{algorithm}[t]
   \caption{MOAP-V1}
   \label{alg:AP-MSGD-V1}
\begin{algorithmic}[1]
\STATE {\bf Input:} $\eta$, $\beta_0, \beta_1$, $B$
\STATE {\bf Initialize:} $\w_1\in\R^d, U_1\in\R^{m\times2}, \m_1\in\R^d$ 
\FOR {$t=1,\dots,T$}
\STATE Sample $B$ points from $\D_+$, denoted by $\B_t$
\STATE Set $[U_{t+1}]_i$ as:\\ 
$\left\{\begin{array}{ll} \Pi_{\Omega}[(1-\beta_0)[U_t]_i + \beta_0 \tilde{g}_{i}(\w_t)^{\top}] & i\in\B_t\\\
[U_t]_i & \text{o.w.}  \end{array}\right.$
\STATE $\tilde{\nabla}_t=\frac{1}{B}\sum_{i\in{\B_t}}  \nabla \tilde{g}_{i}(\w_t)^{\top}\cdot \nabla f([U_{t+1}]^{\top}_i)$
\STATE $\m_{t+1}=(1-\beta_1)\m_t+\beta_1\tilde{\nabla}_t  $
\STATE $\w_{t+1}=\w_t-\eta \m_{t+1}$
\ENDFOR
\end{algorithmic}
\end{algorithm}

To improve the convergence of SOAP, we propose to exploit momentum when updating the model parameter $\w_t$.  Similar to the stochastic momentum method widely used in stochastic optimization~\citep{yangnonconvexmo}, we maintain and update the following stochastic gradient estimator: 
\begin{align}\label{eqn:mom}
\m_{t+1}=(1-\beta_1)\m_t+\beta_1\tilde{\nabla}_t,
\end{align}
where $\tilde{\nabla}_t$ is a stochastic estimator of the gradient at $\w_t$, which is computed  based on $U_{t+1}$ similarly to SOAP. Then, we update the solution by $\w_{t+1} = \w_t - \eta \m_{t+1}$. We investigate two methods for updating $U_{t+1}$.

\paragraph{MOAP-V1.} The first method for updating $U_{t+1}$ is similar to that in \citep{qi2021stochastic}, which is presented in Algorithm \ref{alg:AP-MSGD-V1}. It is worth mentioning that we conduct a projection $\Pi_{\Omega}[\cdot]$ in Step 5 to ensure the two components of each row in $U$ is appropriately bounded such that $\nabla f(\cdot)$ is Lipschitz continuous with respect to its input.   
Regarding the convergence of Algorithm \ref{alg:AP-MSGD-V1}, we first present the following result by setting $\B_t = \D_+$ at every iteration. 
\begin{tho}
\label{thm:msam:v1}
Suppose Assumptions \ref{ass:bounded:ell} and \ref{ass:bounded:g} hold. Then, Algorithm \ref{alg:AP-MSGD-V1} with $\B_t=\D_+$, $\beta_0= O(\epsilon^2), \beta_1=O(\epsilon^2)$, $\eta=O(\epsilon^2)$, and $T=O(1/\epsilon^4)$ ensures
$$\E\left[\frac{1}{T}\sum_{t=1}^{T} \| \nabla F(\w_t)\|^2\right]\leq \epsilon^2.$$	
\end{tho}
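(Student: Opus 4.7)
The plan is to combine a standard smooth-descent inequality with three coupled Lyapunov-style error recursions, one for the tracking error $\|[U_{t+1}]_i - g_i(\w_t)\|^2$, one for the momentum tracking error $\|\m_{t+1} - \nabla F(\w_t)\|^2$, and the descent itself, then telescope and choose parameters.

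First I would verify the preliminary smoothness facts that Assumption~\ref{ass:bounded:ell} buys. From $\ell(\cdot;\x_j,\x_i)$ bounded, $C_\ell$-Lipschitz and $L_\ell$-smooth, plus $\ell(\w;\x_i,\x_i)\geq C>0$, a chain-rule argument shows that $F$ is $L_F$-smooth for some constant $L_F$ depending on $m,n,C,M,C_\ell,L_\ell$, that $\nabla f(\cdot)$ is bounded and Lipschitz on $\Omega$, and that $g_i(\w)\in\Omega$ (so the projection in Step~5 is benign and can only decrease $\|[U_{t+1}]_i-g_i(\w_t)\|$). I would state these as intermediate lemmas and give constants $C_f,L_f,L_g$ etc. for later use.

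The core of the proof is the following three inequalities. (a) Smoothness descent: $F(\w_{t+1})\leq F(\w_t)-\eta\langle\nabla F(\w_t),\m_{t+1}\rangle+\tfrac{L_F\eta^2}{2}\|\m_{t+1}\|^2$, which after the identity $-\langle a,b\rangle=\tfrac12\|a-b\|^2-\tfrac12\|a\|^2-\tfrac12\|b\|^2$ turns into a bound on $\tfrac{\eta}{2}\|\nabla F(\w_t)\|^2$ up to $\tfrac{\eta}{2}\|\nabla F(\w_t)-\m_{t+1}\|^2$ and a negative $\|\m_{t+1}\|^2$ term (which I keep to absorb later). (b) Momentum error recursion: writing $\m_{t+1}-\nabla F(\w_t)=(1-\beta_1)(\m_t-\nabla F(\w_{t-1}))+(1-\beta_1)(\nabla F(\w_{t-1})-\nabla F(\w_t))+\beta_1(\tilde\nabla_t-\nabla F(\w_t))$, taking squared norm and expectation, and using $L_F$-smoothness to control the middle term by $L_F^2\eta^2\|\m_t\|^2$, yields
\begin{align*}
\E\|\m_{t+1}-\nabla F(\w_t)\|^2 &\leq (1-\beta_1)\E\|\m_t-\nabla F(\w_{t-1})\|^2 \\
&\quad+ \tfrac{2L_F^2\eta^2}{\beta_1}\E\|\m_t\|^2 + 2\beta_1^2\E\|\tilde\nabla_t-\nabla F(\w_t)\|^2.
\end{align*}
The last term splits, via bias-variance, into the variance of $\tilde\nabla_t$ conditional on $U_{t+1}$ (bounded by a constant using Assumption~\ref{ass:bounded:g}) plus the bias $\|\E[\tilde\nabla_t|\F_t]-\nabla F(\w_t)\|^2$, which by Lipschitzness of $\nabla f$ on $\Omega$ is $\leq \frac{C_g}{m}\sum_i\E\|[U_{t+1}]_i-g_i(\w_t)\|^2$. (c) Tracking error recursion: because $\B_t=\D_+$, every row is updated identically, and
\[
[U_{t+1}]_i-g_i(\w_t)=(1-\beta_0)([U_t]_i-g_i(\w_{t-1}))+\beta_0(\tilde g_i(\w_t)^\top-g_i(\w_t))-(1-\beta_0)(g_i(\w_t)-g_i(\w_{t-1})),
\]
(the projection only helps). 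Squaring, taking expectation, and using $\|g_i(\w_t)-g_i(\w_{t-1})\|\leq L_g\eta\|\m_t\|$ together with Young's inequality gives
\begin{align*}
\E\|[U_{t+1}]_i-g_i(\w_t)\|^2 &\leq (1-\beta_0)\E\|[U_t]_i-g_i(\w_{t-1})\|^2 \\
&\quad+ \beta_0^2\sigma_g^2 + \tfrac{L_g^2\eta^2}{\beta_0}\E\|\m_t\|^2.
\end{align*}

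Finally I would build a single potential $\Phi_t = \E[F(\w_t)] + \alpha_1\E\|\m_t-\nabla F(\w_{t-1})\|^2 + \tfrac{\alpha_2}{m}\sum_i\E\|[U_t]_i-g_i(\w_{t-1})\|^2$ with $\alpha_1=\Theta(\eta/\beta_1)$ and $\alpha_2=\Theta(\eta/\beta_0)$ chosen so that the cross terms cancel and the coefficient in front of $\|\m_t\|^2$ stays non-positive (this is where one uses $\eta L_F\leq 1$, $\eta^2 L_F^2/\beta_1^2\leq c$, and $\eta^2 L_g^2/\beta_0^2\leq c$, all enforced by $\beta_0,\beta_1,\eta=O(\epsilon^2)$). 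Telescoping from $1$ to $T$ then yields
\[
\frac{1}{T}\sum_{t=1}^T\E\|\nabla F(\w_t)\|^2 \lesssim \frac{\Phi_1-\Phi_{T+1}}{\eta T} + \eta(\beta_1\sigma^2+\beta_0^2\sigma_g^2/\beta_1+\beta_0\sigma_g^2),
\]
and plugging in $\beta_0=\beta_1=\Theta(\epsilon^2)$, $\eta=\Theta(\epsilon^2)$, $T=\Theta(\epsilon^{-4})$ delivers the claimed $\epsilon^2$ bound.

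The main obstacle I anticipate is the coupling between the momentum recursion and the tracking recursion through $\|\m_t\|^2$: both the momentum-error bound and the $U$-tracking bound generate $\eta^2/\beta$ coefficients in front of $\|\m_t\|^2$, while the descent only produces a negative $\|\m_{t+1}\|^2$ proportional to $\eta(1-\eta L_F)$. Making the Lyapunov coefficients $\alpha_1,\alpha_2$ simultaneously large enough to absorb the bias of $\tilde\nabla_t$ inside the momentum recursion, yet small enough that the induced $\|\m_t\|^2$ terms still cancel against the descent, is the delicate algebraic balancing act and is exactly what pins down the parameter scaling $\beta_0,\beta_1,\eta=O(\epsilon^2)$.
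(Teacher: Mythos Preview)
Your proposal is correct and follows essentially the same route as the paper: the paper also derives (a) the descent inequality (its Lemma~3), (b) a momentum--error recursion via the decomposition $\Delta_t=A_1+A_2+A_3+A_4$ with $\E_t[A_4]=0$, and (c) the tracking recursion for $U_{t+1}$ (its Lemma~2), and then cancels the induced $\sum_t\|\m_t\|^2$ terms against the negative $-\tfrac{\eta}{4}\|\m_{t+1}\|^2$ from the descent lemma under exactly the constraints $\eta^2/\beta_0^2,\eta^2/\beta_1^2=O(1)$ that you identify. The only cosmetic difference is packaging: the paper sums each recursion over $t$ and substitutes, while you build a single weighted potential $\Phi_t$; also note that the coefficient in front of $\E\|\tilde\nabla_t-\nabla F(\w_t)\|^2$ in your recursion (b) should be $O(\beta_1)$ rather than $2\beta_1^2$ after the Young split, and the additive error in your final display should not carry the extra $\eta$ factor---these are bookkeeping slips that do not affect the parameter scaling or the $O(1/\epsilon^4)$ conclusion.
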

\begin{Remark}
\emph{Compare with Theorem 2 in~\cite{qi2021stochastic}, the MOAP has a better iteration complexity, i.e., $T=O(1/\epsilon^4)$ vs $T=O(1/\epsilon^5)$ of SOAP. The total sample complexity of MOAP in this case is $O(m/\epsilon^4)$. We include all omitted proofs in the supplementary materials and exhibit the constants in the big $O$ in the proof. }
\end{Remark}


\begin{algorithm}[t]
   \caption{MOAP-V2}
   \label{alg:AP-MSGD-V2}
\begin{algorithmic}[1]
\STATE {\bf Input:} $\eta$, $\beta_0, \beta_1$, $B$
\STATE {\bf Initialize:} $\w_1\in\R^d, U_1\in\R^{m\times2}, \m_1\in\R^d$ 
\FOR {$t=1,\dots,T$}
\STATE Sample $B$ points from $\D_+$, denoted by $\B_t$
\STATE Set $[U_{t+1}]_i$ as:\\
$\left\{\begin{array}{ll}\Pi_{\Omega}[ (1-\beta_0)[U_t]_i + \beta_0\frac{m}{B} \tilde{g}_{i}(\w_t)^{\top}] & i\in\B_t\\
\Pi_{\Omega}[(1-\beta_0)[U_t]_i] & \text{o.w.}  \end{array}\right.$
\STATE $\tilde{\nabla}_t=\frac{1}{B}\sum_{i\in{\B_t}}  \nabla\tilde{g}_{i}(\w_t)^{\top}\cdot \nabla f([U_{t+1}]^{\top}_i)$
\STATE $\m_{t+1}=(1-\beta_1)\m_t+\beta_1 \tilde{\nabla}_t $
\STATE $\w_{t+1}=\w_t-\eta \m_{t+1}$
\ENDFOR
\end{algorithmic}
\end{algorithm}
Next, we consider using random samples $\B_t\sim\D$. Without loss of generality, we assume $|\B_t|=1$ and the sample is randomly chosen from $\D$ with replacement. Then, for Algorithm 1, we provide the following convergence rate. 
\begin{tho}
\label{thm:msam:v2}
Suppose Assumptions  \ref{ass:bounded:ell} and \ref{ass:bounded:g} hold. Then, Algorithm \ref{alg:AP-MSGD-V1} with $B=1$, $\beta_0 = O(\epsilon^2), \beta_1=O(\epsilon^2)$, $\eta=O(\epsilon^3/m)$, $T=O(m/\epsilon^5)$
satisfies
$$\E\left[\frac{1}{T}\sum_{t=1}^{T} \| \nabla F(\w_t)\|^2\right]\leq \epsilon^2.$$	
\end{tho}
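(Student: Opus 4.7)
The plan is to mirror the momentum-based analysis used for Theorem~\ref{thm:msam:v1} while carefully accounting for the fact that with $|\B_t|=1$ the tracker $U_{t+1}$ updates only one of its $m$ rows per iteration, so for any fixed row $i$ the tracking error $\|[U_t]_i-g_i(\w_{t-1})^\top\|^2$ contracts only on the $\Theta(1/m)$-fraction of iterations in which $i\in\B_t$.

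I would first invoke the $L$-smoothness of $F$ (which follows from Assumption~\ref{ass:bounded:ell} as noted in the paper) to obtain
\[
F(\w_{t+1}) \le F(\w_t) - \eta\langle \nabla F(\w_t), \m_{t+1}\rangle + \tfrac{L\eta^2}{2}\|\m_{t+1}\|^2,
\]
and then use $2\langle \nabla F(\w_t),\m_{t+1}\rangle \ge \|\nabla F(\w_t)\|^2 - \|\m_{t+1}-\nabla F(\w_t)\|^2$ to isolate $\|\nabla F(\w_t)\|^2$ up to a gradient-estimation error. That error splits into (a) a momentum-tracking term $\|\m_{t+1}-\tilde{\nabla}_t\|^2$, (b) the variance of $\tilde{\nabla}_t$ given $U_{t+1}$, and (c) a bias caused by $[U_{t+1}]_i$ deviating from $g_i(\w_t)^\top$. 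Using the Lipschitz continuity of $\nabla f$ on $\Omega$, piece (c) is controlled by $\Delta_{t+1}:=\tfrac{1}{m}\sum_{i=1}^m \|[U_{t+1}]_i-g_i(\w_t)^\top\|^2$, and a standard momentum recursion for (a), driven by $\|\w_t-\w_{t-1}\|\le \eta\|\m_t\|$, is inherited almost verbatim from Theorem~\ref{thm:msam:v1}.

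The decisive step is the recursion for $\Delta_t$. Conditioning on the random $\B_t$ with $|\B_t|=1$, each row is resampled with probability $1/m$ and otherwise held fixed. Unbiasedness of $\tilde g_i(\w_t)$, the convexity identity $\|(1-\beta_0)a+\beta_0 b\|^2 \le (1-\beta_0)\|a\|^2+\beta_0\|b\|^2$, Assumption~\ref{ass:bounded:g}, and smoothness of each $g_i$ combine to yield, for suitable constants $C_1, C_2$,
\[
\E[\Delta_{t+1}] \le \Bigl(1-\tfrac{\beta_0}{2m}\Bigr)\E[\Delta_t] + \tfrac{C_1\beta_0^2 \sigma^2}{m} + \tfrac{C_2 m}{\beta_0}\E\|\w_t-\w_{t-1}\|^2.
\]
The contraction factor $\beta_0/m$, rather than $\beta_0$, is the heart of the slowdown: the tracker adapts $m$ times more slowly than in the full-batch setting of Theorem~\ref{thm:msam:v1}, and taming the resulting $m/\beta_0$ amplification in front of $\E\|\w_t-\w_{t-1}\|^2$ is the main obstacle.

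Finally, I would fold the descent inequality, the momentum-error recursion, and the $\Delta_t$ recursion into a Lyapunov potential of the form $\Phi_t = F(\w_t) + c_1\|\m_t-\tilde{\nabla}_{t-1}\|^2 + c_2 \Delta_t$ with constants $c_1,c_2$ chosen so that every residual term is either telescopable or absorbable into $-\tfrac{\eta}{4}\|\nabla F(\w_t)\|^2$. Telescoping and dividing by $T$, I would then set $\beta_0=\Theta(\epsilon^2)$ and $\beta_1=\Theta(\epsilon^2)$ to drive the noise floors $\beta_0\sigma^2$ and $\beta_1\sigma^2$ down to $\Theta(\epsilon^2)$, and shrink $\eta$ to $\Theta(\epsilon^3/m)$ so that the amplified cross term $(m/\beta_0)\eta^2\E\|\m_t\|^2$ stays at $O(\epsilon^2)$. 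The initialization contribution $m\Delta_1/(T\beta_0)$ then forces $T=\Theta(m/\epsilon^5)$, matching the theorem.
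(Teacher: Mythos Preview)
Your route is sound but differs from the paper's in the decisive step. The paper does \emph{not} track the averaged error $\tfrac{1}{m}\sum_i\|[U_t]_i-g_i(\w_{t-1})\|^2$; instead it partitions the iterations by which positive index $i$ is sampled and writes a recursion for $\|[U_{t_k^i+1}]_i-g_i(\w_{t_k^i})\|^2$ along the random subsequence $\{t_k^i\}_k$. That recursion carries a drift $\|\w_{t_k^i}-\w_{t_{k-1}^i}\|^2$, and because the random gap $t_k^i-t_{k-1}^i$ is \emph{correlated} with the intervening $\|\m_j\|^2$, the paper is forced to use the crude bound $\|\m_j\|\le C_F$ together with $\E[(t_k^i-t_{k-1}^i)^2]\le 2m^2$, which is exactly what destroys the momentum benefit and yields $O(m/\epsilon^5)$. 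Your per-step $(1-\beta_0/(2m))$ contraction on the averaged error sidesteps this dependence entirely and is in fact stronger than you exploit: if you keep $\sum_t\|\m_t\|^2$ and absorb it with the $-\tfrac{\eta}{4}\|\m_{t+1}\|^2$ term from the smoothness lemma (as done in Theorem~\ref{thm:msam:v1}) rather than bounding $\|\m_t\|$ by a constant, the binding constraint becomes $m^2\eta^2/\beta_0^2=O(1)$, giving $\eta=O(\epsilon^2/m)$ and $T=O(m/\epsilon^4)$---an improvement for MOAP-V1 that the paper explicitly leaves open. One bookkeeping remark: what you call the ``amplified cross term $(m/\beta_0)\eta^2\E\|\m_t\|^2$'' is only the per-step source in your $\Delta_t$ recursion; after multiplying by the Lyapunov weight $c_2\propto m/\beta_0$ (equivalently, after summing), the effective coefficient is $(m/\beta_0)^2\eta^2$. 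Your choice $\eta=\Theta(\epsilon^3/m)$ still controls it and delivers the theorem's rate, but the stated justification undercounts the amplification by a factor of $m/\beta_0$.
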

\begin{Remark} \emph{
Theorem \ref{thm:msam:v2} implies that, for a stochastic $\B_t$, Algorithm \ref{alg:AP-MSGD-V1} suffers a worst case iteration complexity in the order of $O(m/\epsilon^5)$, which is the same as SOAP for using a stochastic $\B_t$. This is mainly due to a subtle dependent issue caused by the \emph{infrequent updates} of the moving average estimator $U_t$, which makes the momentum fail to accelerate the convergence. The detailed discussion is presented in the supplementary.}
\end{Remark} 


\paragraph{MOAP-V2.} To address the limitation of Algorithm \ref{alg:AP-MSGD-V1} for using a  stochastic $\B_t$, we propose a second method for updating $U_{t+1}$. The procedure is presented in Algorithm \ref{alg:AP-MSGD-V2}. Different from MOAP-V1, MOAP-V2 updates all coordinates of $U_{t+1}$ according to 
\begin{align*}
[U_{t+1}]_i =\left\{\begin{array}{ll} \Pi_{\Omega}[(1-\beta_0)[U_t]_i + \beta_0\frac{m}{B} \tilde{g}_{i}(\w_t)^{\top}] & i\in\B_t\\
\Pi_{\Omega}[(1-\beta_0)[U_t]_i ]& \text{o.w.}  \end{array}\right.,
\end{align*}
i.e, the coordinate of $U_{t+1}$ corresponding to a sampled positive data $i\in\B_t$ is updated similarly to that in MOAP-V1 except the unbiased estimator $ \tilde{g}_{i}(\w_t)$ is scaled by $m/B$, and the coordinate of $U_{t+1}$ corresponding to a non-sampled data $i\in\B_t$ is updated trivially by multiplying with a scaling factor $1 - \beta_0$. It is notable that we can delay updating these coordinates until they are sampled because at the current iteration, these coordinates are not used for updating the model parameter $\w_{t+1}$. 


In order to understand why this method can help improve the convergence for using a stochastic $\B_t$. We can write the update of $U_{t+1}$ equivalently as  $U_{t+1}=\Pi_{\Omega^m}[(1-\beta_0)U_t + \beta_0 \widehat{g}(\w_t)]$, 
where $\widehat g(\w_t)$ is defined as
\begin{align}\label{eqn:cord}
    \widehat g(\w_t)= \frac{1}{B}\sum_{i\in\B_T}\left(\begin{array}{c}0\\\cdot\\ m\tilde g_i(\w_t)^{\top}\\\cdot\\ 0\end{array}\right).
\end{align}
It is not difficult to show that $\E[\widehat g(\w_t)] = g(\w_t)$, where the expectation is taken over the randomness in $\B_t$ and $\tilde g_i(\w_t)$. We refer to this update of $U_{t+1}$ as {\bf the randomized coordinate update}. This update is similar to Step 5  in MOAP-V1 when using $\B_t= \D_+$ to update the model, in which case all coordinates of $U_{t+1}$ are updated by $U_{t+1}= \Pi_{\Omega^m}[(1-\beta_0)U_t + \beta_0 \widetilde{g}(\w_t)]$, 
where $\widetilde g(\w_t) = (\tilde g_1(\w_t), \ldots, \tilde g_m(\w_t))^{\top}$. Both $ \widehat g(\w_t)$ and  $\widetilde g(\w_t)$ are unbiased estimators of $g(\w_t)$. The difference is that the variance of $\widehat g(\w_t)$ is scaled up by a factor of $\frac{m}{B}$. We emphasize it is the combination of the momentum update~(\ref{eqn:mom}) and the randomized coordinate update~(\ref{eqn:cord}) that yields an improved convergence rate of Algorithm \ref{alg:AP-MSGD-V2} presented below.  
\begin{tho}\label{thm:3}
Suppose Assumptions \ref{ass:bounded:ell} and \ref{ass:bounded:g} hold. Then, Algorithm 2 with  $\beta_0=O(\epsilon^2B/m)$, $\beta_1=O(\epsilon^2B/m)$, $\eta=O(\beta)$, $T=O(m/[B\epsilon^4])$ ensures that
$\E\left[\frac{1}{T}\sum_{t=1}^{T} \| \nabla F(\w_t)\|^2\right]\leq \epsilon^2$.	
\end{tho}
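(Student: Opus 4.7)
The plan is to follow the standard ``three-error'' potential-function strategy for stochastic compositional momentum, but carefully account for the extra variance factor $m/B$ introduced by the randomized coordinate update~(\ref{eqn:cord}). Concretely, I would introduce two error sequences, the momentum error $\xi_t = \m_t - \nabla F(\w_{t-1})$ and the tracking error $\Delta_t = U_t - g(\w_{t-1})$ (viewed as a matrix), and study the Lyapunov function
\[
\Phi_t = F(\w_t) - F_* + c_1 \|\xi_{t+1}\|^2 + c_2 \E\|\Delta_{t+1}\|_F^2,
\]
for constants $c_1,c_2$ to be chosen. The target is to show $\E[\Phi_{t+1} - \Phi_t] \le -\tfrac{\eta}{4}\E\|\nabla F(\w_t)\|^2 + \text{(lower-order terms)}$ and then telescope.

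First, I would establish an $L$-smoothness of $F$ on the relevant trajectory. Because $g_i(\w)\in\Omega$ is bounded and $\ell$ is $C_\ell$-Lipschitz and $L_\ell$-smooth, $f\circ g_i$ is smooth with some Lipschitz constant $L_F$ depending on $C, M, C_\ell, L_\ell$; the projection $\Pi_\Omega$ in Step~5 guarantees $[U_{t+1}]_i\in\Omega$ so that $\nabla f([U_{t+1}]_i^\top)$ remains uniformly Lipschitz (call the constant $L_f$) and bounded (call the bound $C_f$). This gives the basic descent inequality
\[
F(\w_{t+1}) \le F(\w_t) - \eta\langle \nabla F(\w_t),\m_{t+1}\rangle + \tfrac{L_F\eta^2}{2}\|\m_{t+1}\|^2,
\]
and after the standard identity $\langle a, b\rangle = \tfrac12(\|a\|^2+\|b\|^2-\|a-b\|^2)$ yields a descent plus the momentum error $\|\m_{t+1}-\nabla F(\w_t)\|^2$ to be controlled.

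Next I would derive a recursion for the tracking error. The key observation is that under the randomized coordinate update, $\E[\widehat g(\w_t)\mid \F_t] = g(\w_t)$, so
\[
\E[\Delta_{t+1}\mid\F_t] = (1-\beta_0)\Delta_t + (1-\beta_0)(g(\w_{t-1})-g(\w_t)),
\]
while the conditional second moment of $\widehat g(\w_t) - g(\w_t)$ is bounded by $\tfrac{m}{B}\sigma^2 M^2$ (or a similar constant), since each sampled block is scaled by $m/B$. Combined with the nonexpansiveness of $\Pi_{\Omega^m}$ and the drift $\|g(\w_t)-g(\w_{t-1})\|_F^2 \le m C_\ell^2 \|\w_t-\w_{t-1}\|^2 = m C_\ell^2 \eta^2 \|\m_t\|^2$, one obtains
\[
\E\|\Delta_{t+1}\|_F^2 \le (1-\beta_0)\E\|\Delta_t\|_F^2 + \tfrac{C\eta^2 m}{\beta_0}\E\|\m_t\|^2 + C'\beta_0^2 \tfrac{m}{B}.
\]
An analogous recursion for the momentum error uses $\tilde\nabla_t$ being conditionally close to $\nabla F(\w_t)$ up to an error controlled by $\E\|\Delta_{t+1}\|_F^2/m$ (this is the place where MOAP-V2 pays for its biased gradient estimator), together with the variance of $\tilde\nabla_t$ which is a constant:
\[
\E\|\xi_{t+1}\|^2 \le (1-\beta_1)\E\|\xi_t\|^2 + \tfrac{C\eta^2}{\beta_1}\E\|\nabla F(\w_t)\|^2 + C'\beta_1^2 + C''\tfrac{\E\|\Delta_{t+1}\|_F^2}{m}.
\]

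Finally I would add these recursions with well-chosen coefficients $c_1,c_2$ so that the $\E\|\m_{t+1}\|^2$ and $\E\|\Delta_{t+1}\|_F^2$ terms cancel, leaving only the descent $-\tfrac{\eta}{4}\E\|\nabla F(\w_t)\|^2$ plus additive noise of order $c_1\beta_1^2 + c_2\beta_0^2 m/B$. Plugging in $\beta_0=\beta_1=\Theta(\epsilon^2 B/m)$, $\eta=\Theta(\beta_0)$, telescoping over $T=\Theta(m/(B\epsilon^4))$ iterations, and dividing by $\eta T$ yields the advertised bound $\tfrac{1}{T}\sum_t \E\|\nabla F(\w_t)\|^2 \le \epsilon^2$. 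The main obstacle will be step three: showing that the coupling between $\B_t$ in the $U$-update and $\B_t$ in the gradient estimator $\tilde\nabla_t$ does not break the unbiased/variance decomposition; this is why the analysis needs $[U_{t+1}]_i$ (not $[U_t]_i$) inside $\tilde\nabla_t$ and why the $m/B$ rescaling, combined with $\beta_0 B/m = O(\epsilon^2)$, is precisely what keeps the compound variance at the $O(\epsilon^2)$ level required for an $O(1/\epsilon^4)$ rate.
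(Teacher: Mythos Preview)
Your proposal is correct and follows essentially the same approach as the paper: a momentum-error recursion (the paper's $\Delta_t=\m_{t+1}-\nabla F(\w_t)$, your $\xi_{t+1}$) coupled to a tracking-error recursion for $U_{t+1}-g(\w_t)$ via the unbiasedness of $\widehat g(\w_t)$ and the variance-recursion lemma, then combined with the smoothness descent inequality; the paper sums each recursion over $t$ separately rather than using an explicit Lyapunov potential $\Phi_t$, but the content is identical. One minor correction: the conditional second moment of $\widehat g(\w_t)-g(\w_t)$ is $O(m^2/B)$ rather than $O(m/B)$ (the $B$ nonzero rows are each scaled by $m/B$, giving $B\cdot(m/B)^2=m^2/B$), though after the $1/m$ coupling into the momentum recursion this still produces the $\beta_0 m/B$ additive noise that drives the choice $\beta_0=O(\epsilon^2 B/m)$ and the $O(m/(B\epsilon^4))$ rate.
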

\begin{Remark} \emph{
The above theorem implies that, for a stochastic $\B_t$, Algorithm \ref{alg:AP-MSGD-V2} enjoys  a better iteration complexity in the order of $O(m/\epsilon^4)$, which is the better than SOAP for using a stochastic $\B_t$. 
}
\end{Remark} 
 
\subsection{Stochastic Adaptive  Algorithms for AP Maximization: ADAP}
\begin{algorithm}[t]
   \caption{ADAP}
   \label{alg:Adam}
\begin{algorithmic}[1]
\STATE {\bf Input:} $\eta$, $\beta_0, \beta_1, \delta$, $B$
\STATE {\bf Initialize:} $\w_1\in\R^d, U_1\in\R^{m\times2}, \m_1\in\R^d$ 
\FOR {$t=1,\dots,T$}
\STATE Sample $B$ points from $\D_+$, denoted by $\B_t$
\STATE Set $[U_{t+1}]_i$ as:\\  $\left\{\begin{array}{ll} \Pi_{\Omega}[(1-\beta_0)[U_t]_i + \beta_0\frac{m}{B} \tilde{g}_{i}(\w_t)^{\top}] & i\in\B_t\\
\Pi_{\Omega}[(1-\beta_0)[U_t]_i] & \text{o.w.}  \end{array}\right.$
\STATE $\tilde{\nabla}_t=\frac{1}{B}\sum_{i\in \B_t}  \nabla_{\w}\tilde{g}_{i,t}(\w_t)^{\top}\cdot \nabla f([U_{t+1}]^{\top}_i)$
\STATE $\m_{t+1}=(1-\beta)\m_t+\beta \tilde{\nabla}_t $
\STATE $\bv_{t+1}=h_t(\bv_t,\tilde{\nabla}_t^2)$ \hfill $\diamond h_t$ can be implemented by any of the methods in~(\ref{eqn:ada})
\STATE $\w_{t+1}=\w_t-\frac{\eta}{\sqrt{\bv_{t+1}}+\delta} \m_{t+1}$
\ENDFOR
\end{algorithmic}
\end{algorithm}

In this section, we extend our technique to stochastic adaptive algorithms, which use  adaptive step sizes. For standard stochastic optimization, various adaptive step sizes have been investigated, including AdaGrad~\citep{duchi2011adaptive}, Adam~\citep{kingma2014adam}, AMSGrad~\citep{reddi2019convergence}, AdaBound~\citep{luo2018adaptive}, etc.

Motivated by these existing adaptive methods, we propose stochastic adaptive algorithms for AP maximization in Algorithm \ref{alg:Adam}, which is referred to as ADAP. The difference from Algorithm~\ref{alg:AP-MSGD-V2} is that we need to maintain and update another sequence $\bv_t \in\mathbb R^d$ in order to compute an adaptive step size in Step 8, where $h_t(\cdot)$ denotes a generic updating function. The $\bv_t$ is usually computed from the second-order moment (i.e., coordinate-wise square of the stochastic gradient estimator). For our problem, we compute $\bv_t$ from $\tilde{\nabla}_t^2$, and $h_t(\cdot)$ can be implemented by the following methods for having different adaptive step sizes: 
\begin{equation}\label{eqn:ada}
\begin{aligned}
\text{Adam-style:} \: \bv_{t+1} &= (1-\beta_t')\bv_t + \beta_t'\tilde{\nabla}_t^2\\
\text{AdaGrad-style:} \: \bv_{t+1} &= \frac{1}{t+1}\sum_{i=1}^t\tilde{\nabla}_i^2\\
\text{AMSGrad-style:} \: \bv'_{t+1} &= (1-\beta_t')\bv'_t + \beta_t' \tilde{\nabla}_t^2,\\ \bv_{t+1} &= \max(\bv_t, \bv'_{t+1})\\
\text{AdaBound-style:} \: \bv'_{t+1} &= (1-\beta_t')\bv'_t + \beta_t'\tilde{\nabla}_t^2,\\ \bv_{t+1} &= \Pi_{[1/c_u^2, 1/c_l^2]}[\bv'_{t+1}]
\end{aligned}
\end{equation}
where $c_l<c_u$ are two parameters of the AdaBound method and $\Pi_{[a,b]}$ denotes a clipping operator that clips the input argument into the range $[a, b]$. 
Given $\bv_{t+1}$, we update the model parameter by 
$$\w_{t+1}=\w_t-\frac{\eta}{\sqrt{\bv_{t+1}}+\delta} \m_{t+1},$$
where $\delta>0$ is a parameter. 
The convergence analysis of ADAP relies on the boundness of the  step size scaling factor $\eta^s_t = 1/(\sqrt{\bv_{t+1}} +\delta)$ following similarly a recent study on the convergence of the Adam-style algorithms for standard stochastic non-convex optimization~\citep{DBLP:journals/corr/abs-2104-14840}. Specifically, under Assumptions \ref{ass:bounded:ell} and \ref{ass:bounded:g}, we could show that $\|\tilde{\nabla}_t\|_\infty$ are bounded by a constant, which further implies that the step size scaling factor $\eta^s_t = 1/(\sqrt{\bv_{t+1}} + \delta)$ is upper bounded and lower bounded by some constants $c_{u}$ and $c_{l}$. 
Finally, we present the convergence of  Algorithm  \ref{alg:Adam} below. 
\begin{tho}\label{thm:4}
Suppose Assumptions \ref{ass:bounded:ell} and \ref{ass:bounded:g}  hold. Then, Algorithm \ref{alg:Adam} with $\beta_0=O(B\epsilon^2/m)$, $\beta_1=O(B\epsilon^2/m)$, $\eta=O(\beta)$, $T=O(m/[B\epsilon^4])$ and any of the methods in~(\ref{eqn:ada}) for computing $\bv_{t+1}$ ensures that
$\E\left[\frac{1}{T}\sum_{t=1}^{T} \| \nabla F(\w_t)\|^2\right]\leq \epsilon^2.$	
\end{tho}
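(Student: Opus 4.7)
The plan is to build on the analysis of Theorem~\ref{thm:3} (MOAP-V2) and extend it to handle the adaptive step size factor $\eta_t^s = 1/(\sqrt{\bv_{t+1}} + \delta)$. The starting point is the observation, mentioned in the text before the theorem, that under Assumptions~\ref{ass:bounded:ell} and~\ref{ass:bounded:g} the stochastic gradient estimator $\tilde{\nabla}_t$ is bounded in $\ell_\infty$ norm: since $[U_{t+1}]_i$ lives in $\Omega$, the vector $\nabla f([U_{t+1}]_i^\top) = (-1/u_2,\, u_1/u_2^2)^\top$ is bounded (using $u_2\ge C$), while $\nabla\tilde g_i(\w_t)$ is bounded by $\sigma$. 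This yields a constant $G$ with $\|\tilde{\nabla}_t\|_\infty \le G$.

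With this bound in hand, the first step is to verify, case by case, that each of the four update schemes in~\eqref{eqn:ada} keeps $\bv_{t+1}$ coordinate-wise bounded from above by $G^2$ and, for Adam/AMSGrad/AdaBound styles, bounded from below in a controllable way (with AdaGrad handled via the running average and AdaBound via the explicit clipping). Hence there exist constants $c_l, c_u > 0$ such that $\eta_t^s \in [c_l, c_u]^d$ coordinate-wise. This reduces the adaptive update to a preconditioned momentum step with a bounded, time-varying preconditioner.

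The second step is a descent inequality via smoothness of $F$. Writing $\w_{t+1}-\w_t = -\eta\,\eta_t^s\odot\m_{t+1}$ and using $L$-smoothness of $F$ (established from Assumption~\ref{ass:bounded:ell}), we get a one-step decrease of the form
\begin{align*}
F(\w_{t+1}) \le F(\w_t) - \eta\langle \nabla F(\w_t),\, \eta_t^s\odot \m_{t+1}\rangle + \tfrac{L\eta^2}{2}\|\eta_t^s\odot \m_{t+1}\|^2.
\end{align*}
Splitting $\m_{t+1} = \nabla F(\w_t) + (\m_{t+1}-\nabla F(\w_t))$, the leading term contributes at least $c_l\eta\|\nabla F(\w_t)\|^2$, while the cross term is handled by Young's inequality together with a careful bound on the momentum tracking error. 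Because $\eta_t^s$ depends on past iterates, one must bound the drift between $\eta_t^s$ and a reference preconditioner (e.g., $\eta_{t-1}^s$); here one uses $|\sqrt{v_{t+1}}-\sqrt{v_t}| \le |v_{t+1}-v_t|/(\sqrt{v_t})$ and the parameter choice of $\beta_t'$ to ensure the preconditioner changes slowly, mirroring the treatment in~\citep{DBLP:journals/corr/abs-2104-14840}.

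The third and most technically involved step is to control the momentum error $\E\|\m_{t+1}-\nabla F(\w_t)\|^2$. The standard momentum recursion gives
\begin{align*}
\m_{t+1}-\nabla F(\w_t) = (1-\beta_1)(\m_t-\nabla F(\w_{t-1})) + \beta_1(\tilde\nabla_t - \nabla F(\w_t)) + (1-\beta_1)(\nabla F(\w_{t-1})-\nabla F(\w_t)),
\end{align*}
so the error decomposes into (a) a contractive past-error term, (b) the per-step bias/variance of $\tilde\nabla_t$ relative to $\nabla F(\w_t)$, and (c) a smoothness term controlled by $\|\w_t-\w_{t-1}\|^2 = \eta^2\|\eta_{t-1}^s\odot\m_t\|^2 \le \eta^2 c_u^2\|\m_t\|^2$. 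The key nontrivial quantity is (b): the gap $\tilde\nabla_t-\nabla F(\w_t)$ decomposes into the stochastic sampling error of $\nabla\tilde g_i$ (mean-zero, bounded variance) plus a bias arising from plugging $[U_{t+1}]_i$ into $\nabla f$ instead of $g_i(\w_t)$. The latter is bounded by $L_f\|[U_{t+1}]_i - g_i(\w_t)\|$ using Lipschitz smoothness of $\nabla f$ on $\Omega$, and the tracking-error recursion for $\E\|U_{t+1}-g(\w_t)\|_F^2$ reused from the proof of Theorem~\ref{thm:3} gives the needed bound of order $O(\beta_0 m/B) + O(\eta^2/\beta_0)$ thanks to the randomized coordinate update~\eqref{eqn:cord}.

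Finally, I would sum the descent inequality from $t=1$ to $T$, combine it with the summed momentum-error and $U$-tracking recursions, and plug in $\beta_0=\beta_1=O(B\epsilon^2/m)$, $\eta=O(\beta_1)$, $T=O(m/(B\epsilon^4))$ exactly as in Theorem~\ref{thm:3}. The main obstacle I anticipate is cleanly handling the interaction between the three approximation errors (momentum error, $U$-tracking error, and adaptive-preconditioner drift) simultaneously, since the adaptive step introduces a $[c_l,c_u]$ preconditioner into every cross-term, preventing direct reuse of the Theorem~\ref{thm:3} estimates and forcing a Lyapunov function that combines $F(\w_t)$ with both $\|\m_t-\nabla F(\w_{t-1})\|^2$ and $\|U_t-g(\w_{t-1})\|_F^2$.
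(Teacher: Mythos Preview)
Your overall strategy is sound and matches the paper's: bound the adaptive scaling $\eta_t^s=1/(\sqrt{\bv_{t+1}}+\delta)$ in $[c_l,c_u]$ via the boundedness of $\tilde\nabla_t$, reuse the momentum-error and $U$-tracking recursions from Theorem~\ref{thm:3}, and combine with a descent inequality. The decomposition of $\m_{t+1}-\nabla F(\w_t)$ and the handling of the bias through $L_f\|[U_{t+1}]_i-g_i(\w_t)\|$ are exactly what the paper does.

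Where you diverge from the paper is in Step~2, and this is worth noting because it is the source of the ``main obstacle'' you anticipate at the end. You propose to control the drift of the preconditioner, comparing $\eta_t^s$ to $\eta_{t-1}^s$ via $|\sqrt{v_{t+1}}-\sqrt{v_t}|$ and appealing to slow change governed by $\beta_t'$. The paper avoids this entirely. Its descent step (Lemma~\ref{lem:smooth:adam}) is a purely deterministic inequality that uses only the \emph{pointwise} bounds $c_l\le[\eta_t^s]_i\le c_u$: writing the weighted polarization identity
\[
\langle\nabla F(\w_t),\eta_t^s\odot\m_{t+1}\rangle=\tfrac12\big(\|\nabla F(\w_t)\|_{\eta_t^s}^2+\|\m_{t+1}\|_{\eta_t^s}^2-\|\nabla F(\w_t)-\m_{t+1}\|_{\eta_t^s}^2\big)
\]
and bounding each weighted norm by $c_l\|\cdot\|^2$ or $c_u\|\cdot\|^2$ as appropriate gives, for $\eta L_F\le c_l/(2c_u^2)$,
\[
F(\w_{t+1})\le F(\w_t)+\tfrac{\eta c_u}{2}\|\Delta_t\|^2-\tfrac{\eta c_l}{2}\|\nabla F(\w_t)\|^2-\tfrac{\eta c_l}{4}\|\m_{t+1}\|^2.
\]
No reference preconditioner, no drift term, and no dependence on $\beta_t'$ appear. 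Consequently, the only change relative to the Theorem~\ref{thm:3} argument is that $\|\w_t-\w_{t-1}\|^2\le \eta^2 c_u^2\|\m_t\|^2$ picks up a $c_u^2$ and the final combination picks up a $c_u/c_l$ ratio; the three recursions (descent, momentum error, $U$-tracking) combine exactly as before with these constants inserted. The Lyapunov coupling you worry about is therefore no harder than in Theorem~\ref{thm:3}. Your route via preconditioner drift would also go through, but it introduces extra terms to track and an unnecessary dependence on the second-moment parameter $\beta_t'$, whereas the paper's argument is uniform over all four schemes in~\eqref{eqn:ada} precisely because it never looks at how $\bv_{t+1}$ evolves.
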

\begin{Remark}

\emph{We note that previous work \citep{qi2021stochastic} also proposed a Adam-style algorithm, but their analysis only covers the AMSGrad-style adaptive step size and has a worse iteration complexity in the order of $O(m/\epsilon^5)$ for using a stochastic $\B_t$.  }
\end{Remark}

Finally, it is worth mentioning that both Theorem~\ref{thm:3} and Theorem~\ref{thm:4} can be extended to using a decreasing step size $\eta$ and parameters $\beta_0, \beta_1$ in the order of $1/\sqrt{t}$, where $t$ denotes the current round.  We refer the readers to the supplement for the proofs.

\section{EXPERIMENTS}
\label{section:exp}
\begin{table*}[t] 
	\caption{Statistics of Datasets.}\label{exp:tab:2} 
	\centering
	\label{exp:tab:1}
	{\begin{tabular}{c|c|c|c}
			\toprule
		 Data Set	&\#training examples&\#testing examples&Proportion of positive data\\
		\hline 
	mushrooms&$2920$ &$1504$&$5.27\%$\\
		phishing&$4987$ &$2568$&$35.65\%$\\
		w6a&17188 &32561&$3.05\%$\\
	    a9a&$32561$ &$16281$&$24.08\%$\\
	    w8a&$49749$ &$14951$&$2.97\%$\\
		ijcnn1&$49990$ &$91701$&$9.71\%$\\
		\bottomrule
	\end{tabular}}
\end{table*}

In this section, we conduct experiments on benchmark datasets to demonstrate the effectiveness of the proposed algorithms. More results can be found in the Appendix.
\subsection{Optimizing Linear Models}
We first consider learning a linear model for prediction. For baselines, we choose three state-of-the-art methods for stochastic optimization of AP,  namely SOAP (SGD)~\citep{qi2021stochastic}, SOAP (Adam)~\citep{qi2021stochastic} and SmoothAP~\citep{DBLP:conf/eccv/BrownXKZ20}. For our algorithms, we implement MOAP (v2) and ADAP (Adam-style). 

\paragraph{Datasets.} We use six imbalanced benchmark datasets from LIBSVM data~\citep{CC01a111}, whose statistics are summarized in the appendix. For all data sets, we scale the feature vectors into $[0,1]$. For \emph{mushrooms} and \emph{phishing} dataset, since the original class distribution is relatively balanced, and no testing set is given, we randomly drop a set of positive data from the training examples, and divide them into  training and testing according to 2:1 ratio. 

\paragraph{Configurations.} In all experiments, we use the sigmoid function  $1/(\exp(-\w^{\top}\x)+1)$ to generate a prediction score for computing the AP. We set the $\ell_2$-regulation parameter as $10^{-4},$  the mini-batch size as  20, and run a total of $T=500$ iterations. For MOAP, ADAP, SOAP, we choose the squared hinge ls as the surrogate function following \cite{qi2021stochastic}. For SmoothAP,  we apply the sigmoid function to approximate the indicator function following \cite{DBLP:conf/eccv/BrownXKZ20}. Other involved parameters for each algorithm are tuned on the training data. For MOAP and ADAP,  we decrease $\eta$ and $\beta_0=\beta_1$ on the order of $O(1/\sqrt{t})$ according to the theoretical analysis, and tune the initial value of the step size in the set $\{20,10,1,10^{-1},10^{-2}\}$, and $\beta_0=\beta_1$ in the set $\{0.9,0.5,0.1\}$. For other algorithms, we observe poor performance when using a decreasing step size and $\beta$ parameters, and thus report their results by using fixed parameters tuned in the same range as our algorithms. We repeat each algorithm 5 times on each data and report the averaged results.  

\begin{figure*}
\centering
\subfigure[mushrooms dataset]{
\label{Fig.2sub.3}
\includegraphics[width=0.3\textwidth]{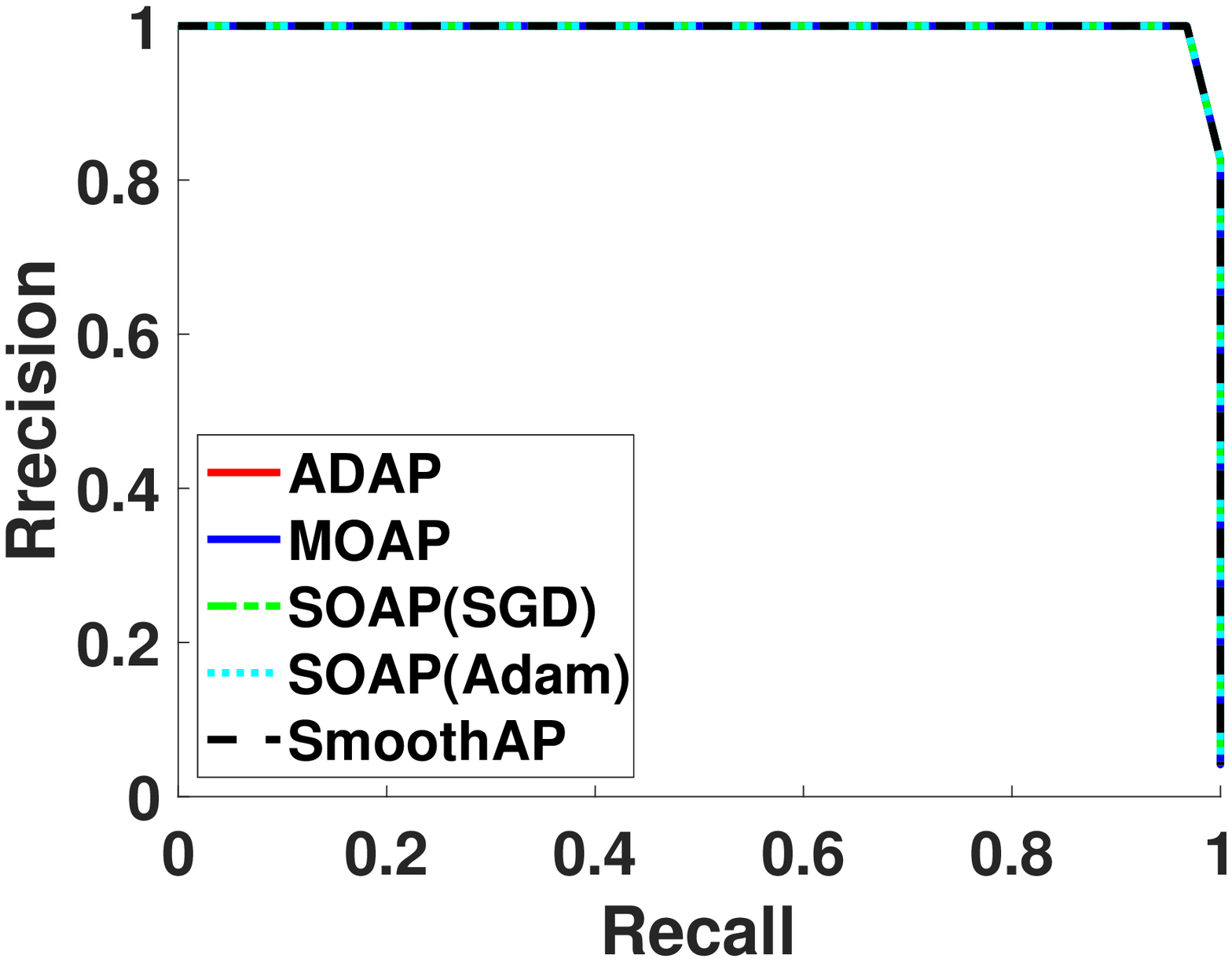}}
\subfigure[phishing dataset]{
\label{Fig.2sub.4}
\includegraphics[width=0.3\textwidth]{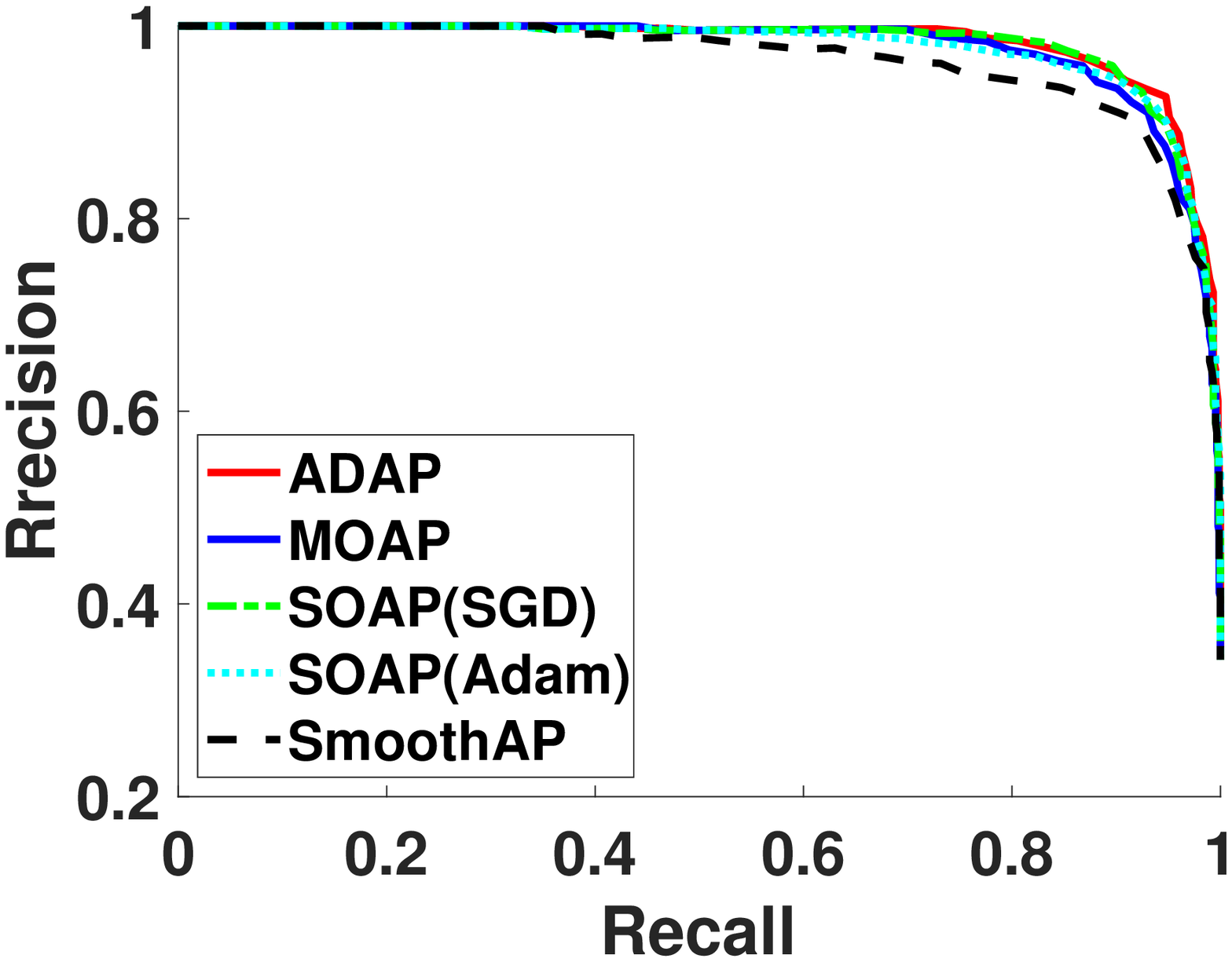}} 
\subfigure[w6a dataset]{
\label{Fig.2sub.5}
\includegraphics[width=0.3\textwidth]{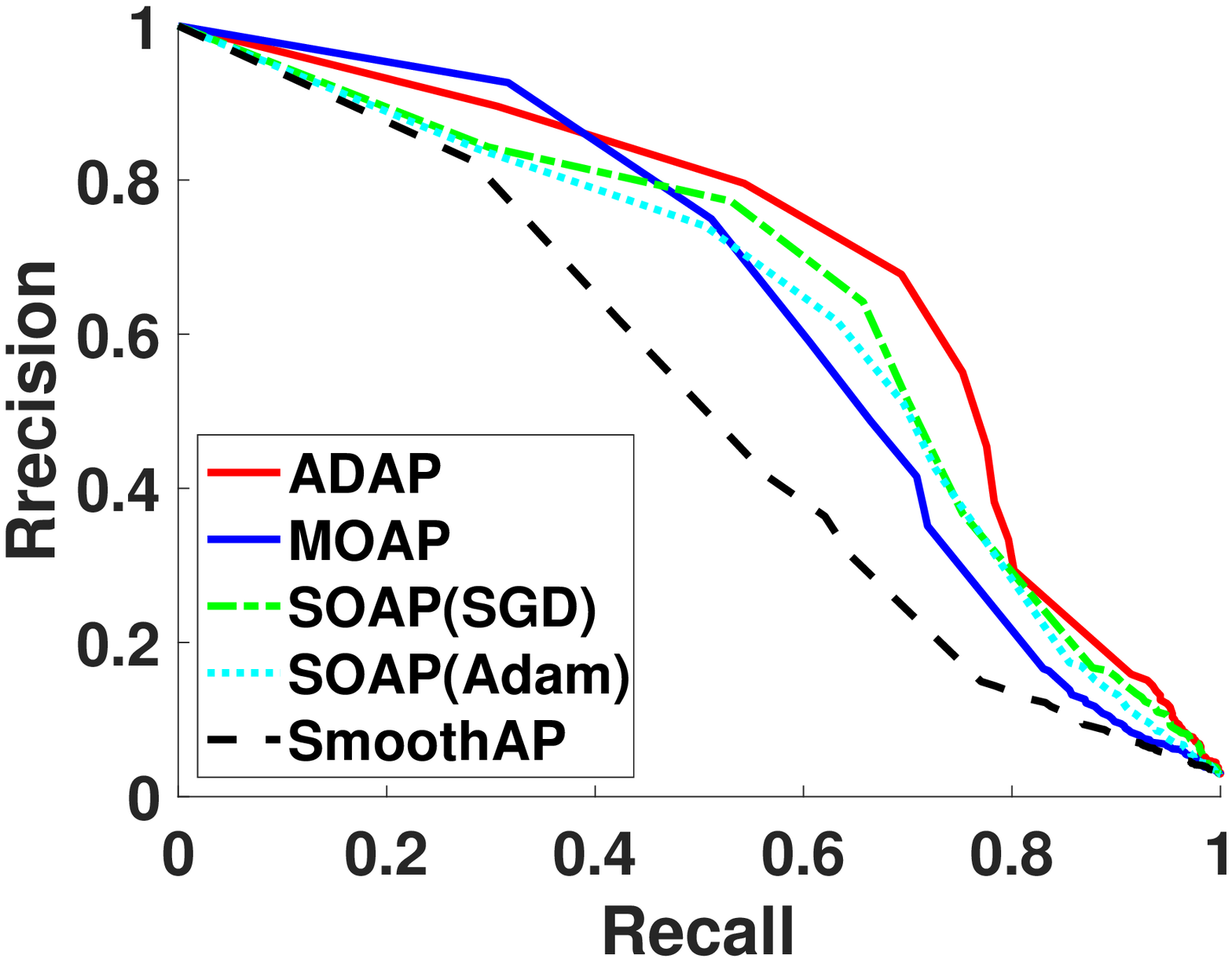}} 
\subfigure[a9a dataset]{
\label{Fig.2sub.2}
\includegraphics[width=0.3\textwidth]{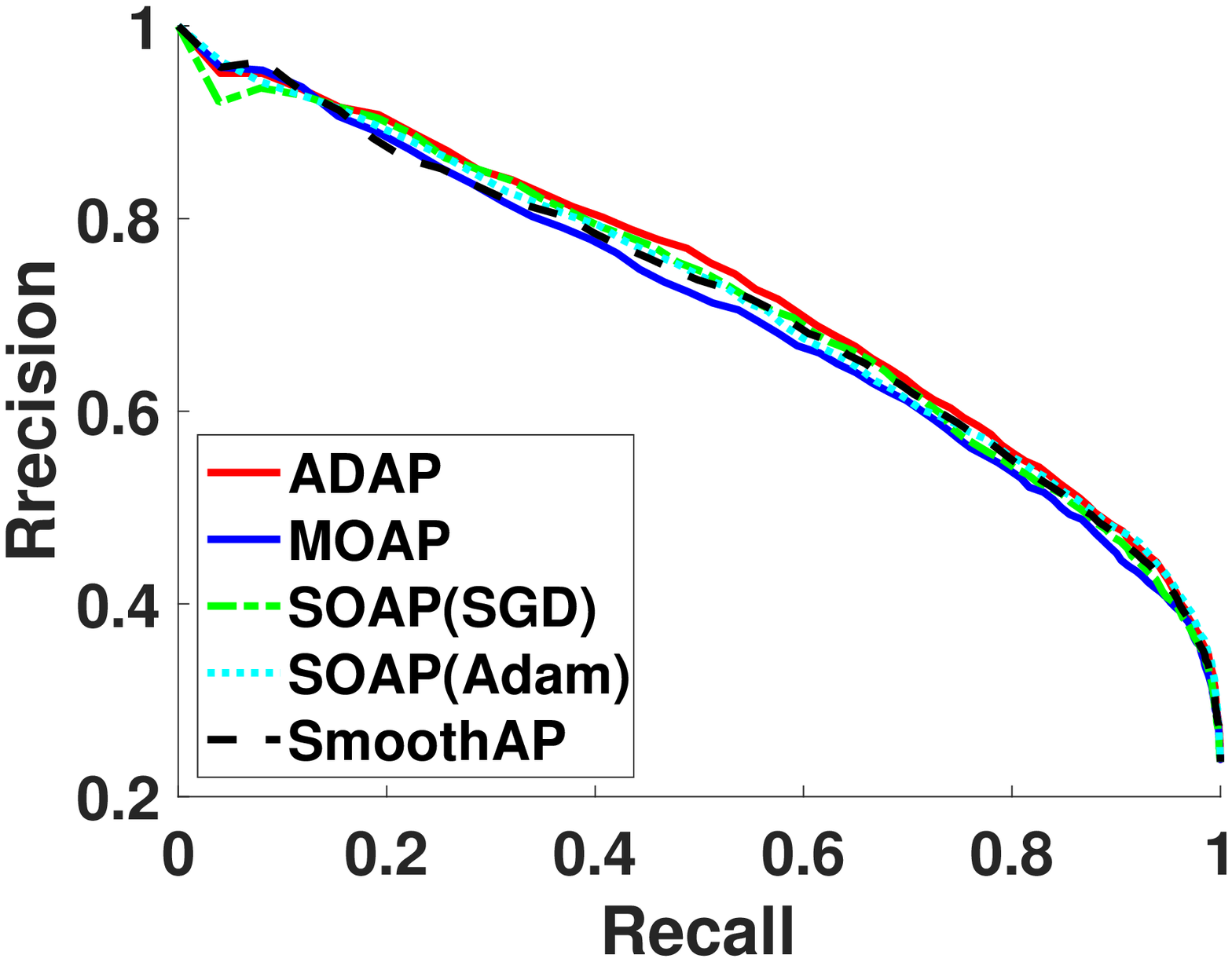}}
\subfigure[w8a dataset]{
\label{Fig.2sub.1}
\includegraphics[width=0.3\textwidth]{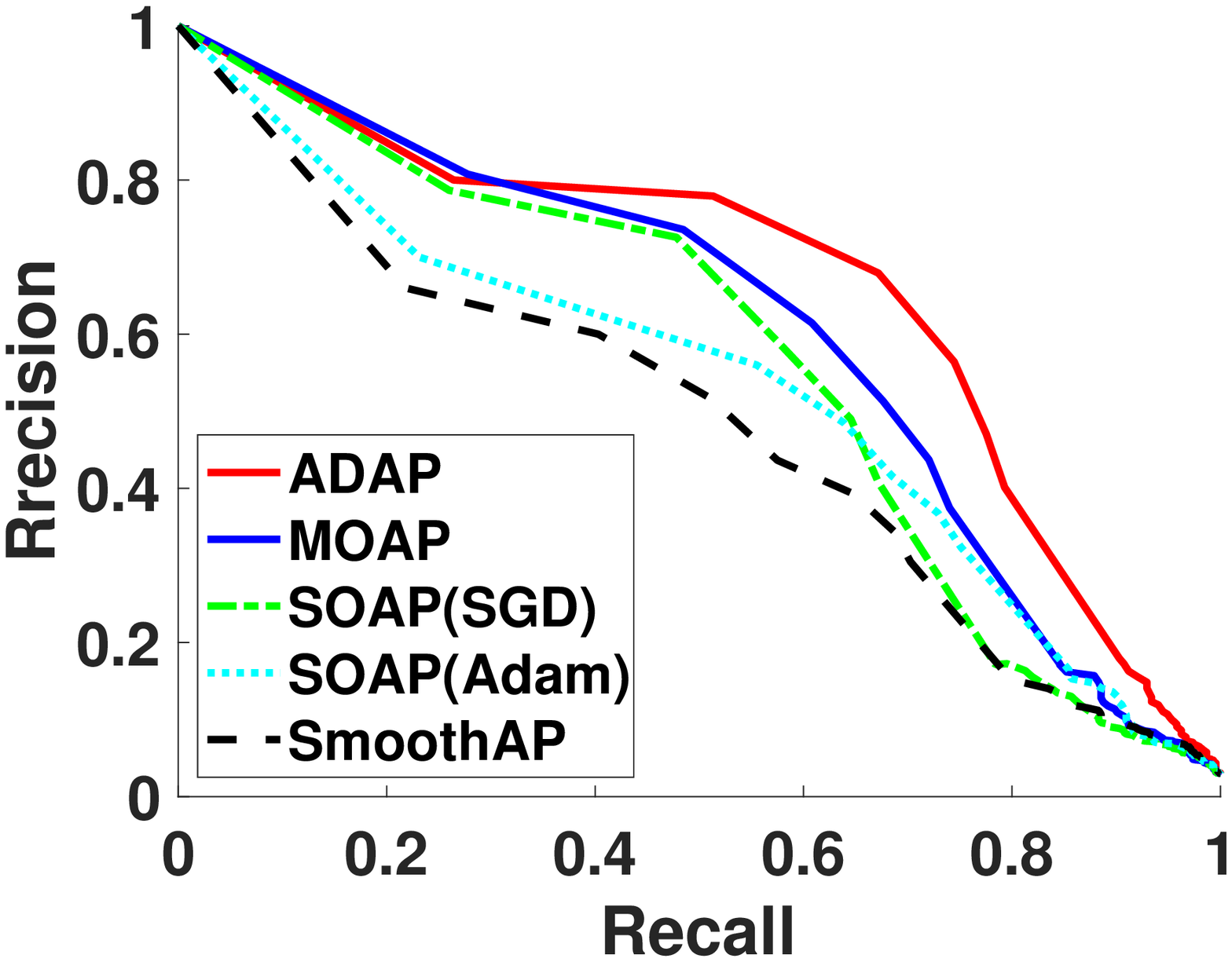}} 
\subfigure[ijcnn1 dataset]{
\label{Fig.2sub.6}
\includegraphics[width=0.3\textwidth]{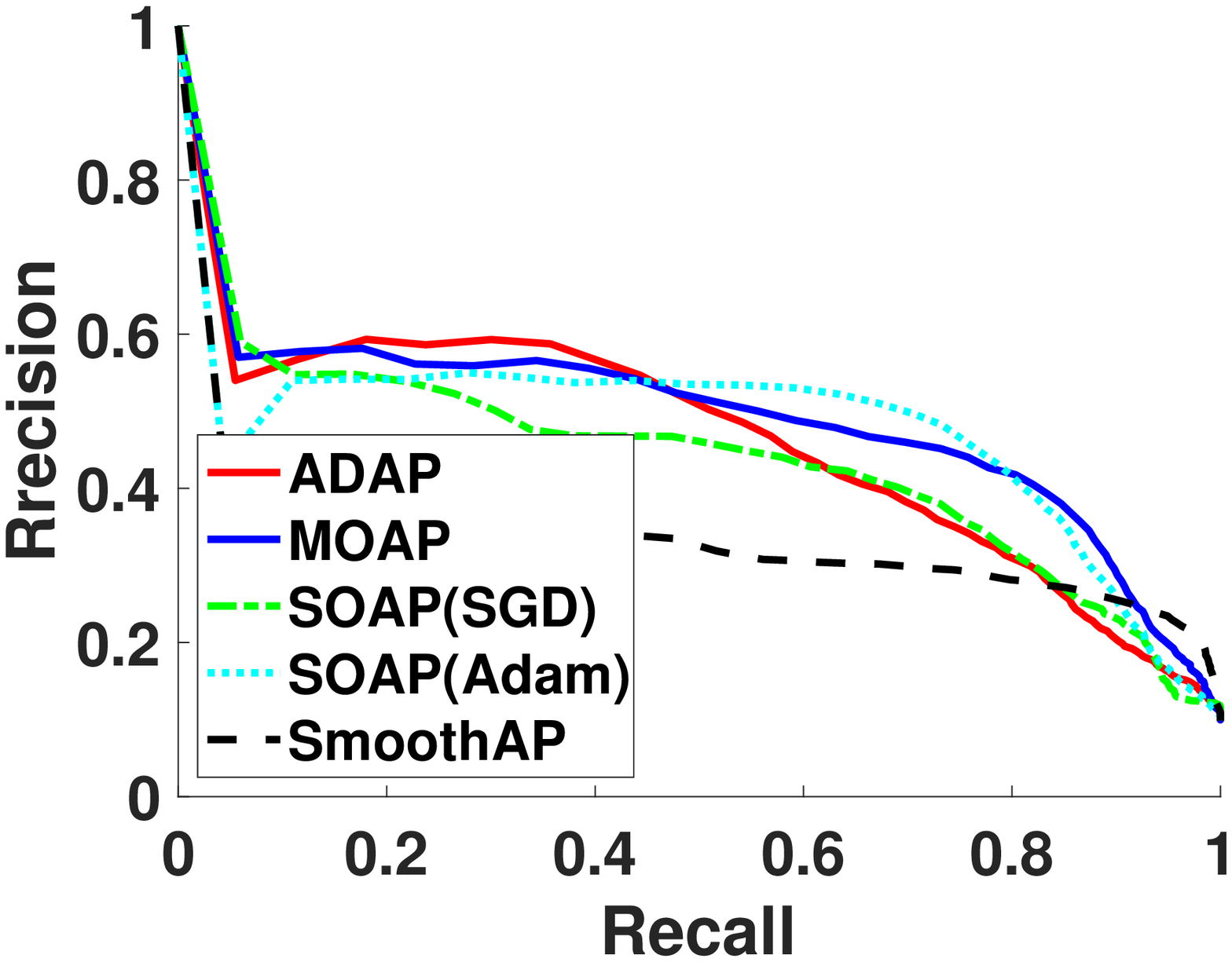}} 
\caption{Precision-Recall curves of the Final models on the testing set}
\label{Fig.main2}
\end{figure*}
\begin{table*}[t] 
	\caption{Final averaged AP scores on the testing data.}\label{exp:tab:3} 
	\centering
	\label{tab:1}
	{\begin{tabular}{l|c|c|c}
			\toprule
		 Method	&mushrooms&phishing&w6a\\
		\hline 
		MOAP&$0.999\pm 2$E-$6$&$0.978\pm2$E-$6$&$0.596\pm2$E-$3$\\
		ADAP&${\bf 1}$&${\bf 0.981\pm2}${\bf E-}${\bf 7}$&${\bf 0.675\pm1}$\bf{E-}${\bf 4}$\\
		\hline
	    SOAP (SGD)&$0.997\pm1$E-$5$&$0.978\pm4$E-$6$&$0.612\pm4$E-$5$\\
	    SOAP (Adam)&$1$&$0.977\pm1$E-$6$&$0.592\pm3$E-$4$\\
	    SmoothAP& $0.962\pm1$E-$3$&$0.967\pm8$E-$6$&$0.473\pm1$E-$4$ \\
		\midrule

		\midrule
		 Method	&a9a&w8a&ijcnn1\\
		\hline 
		MOAP&$0.715\pm3$E-$5$& $0.584\pm 4$E-$4$ &$0.471\pm6$E-$4$ \\
		ADAP&${\bf 0.730\pm1}${\bf E-}${\bf 6}$& ${\bf 0.659\pm5}${\bf E-}${\bf 4}$&$0.536\pm1$E-$3$\\
		\hline
	    SOAP (SGD) &$0.714\pm3$E-$5$&$ 0.561 \pm 1$E-$3$&$0.476\pm2$E-$4$\\
	    SOAP (Adam) &$0.721\pm1$E-$5$&$0.475\pm2$E-$3$&${\bf 0.547\pm1}${\bf E-}${\bf 3}$\\
	    SmoothAP &$0.713\pm 3$E-$5$&$0.459\pm3$E-$4$& $0.374\pm3$E-$4$\\
		\bottomrule
	\end{tabular}
}
\end{table*}
\begin{figure*}[h]
\centering  
\subfigure[mushrooms dataset]{
\label{Fig.sub.3}
\includegraphics[width=0.31\textwidth]{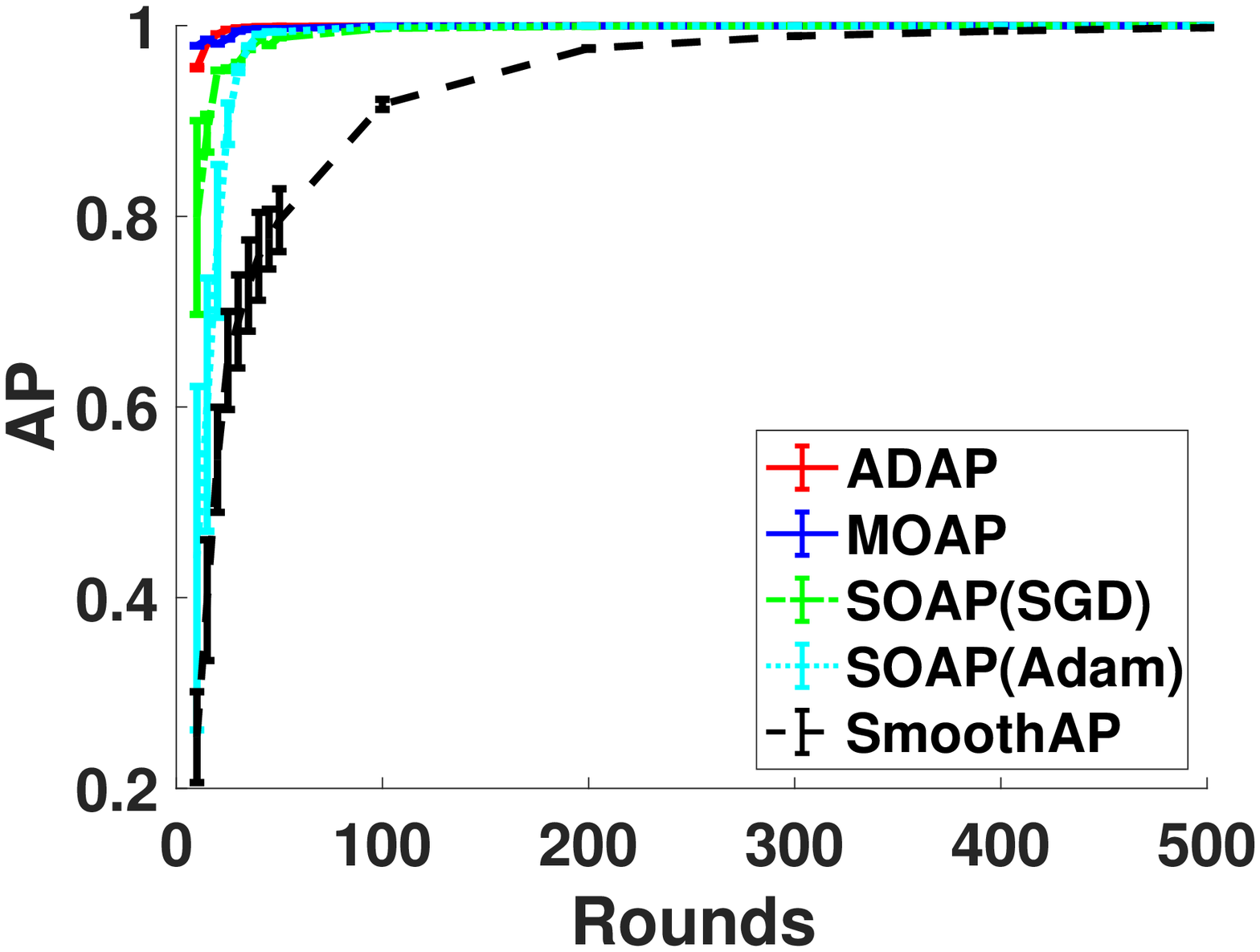}}
\subfigure[phishing dataset]{
\label{Fig.sub.4}
\includegraphics[width=0.31\textwidth]{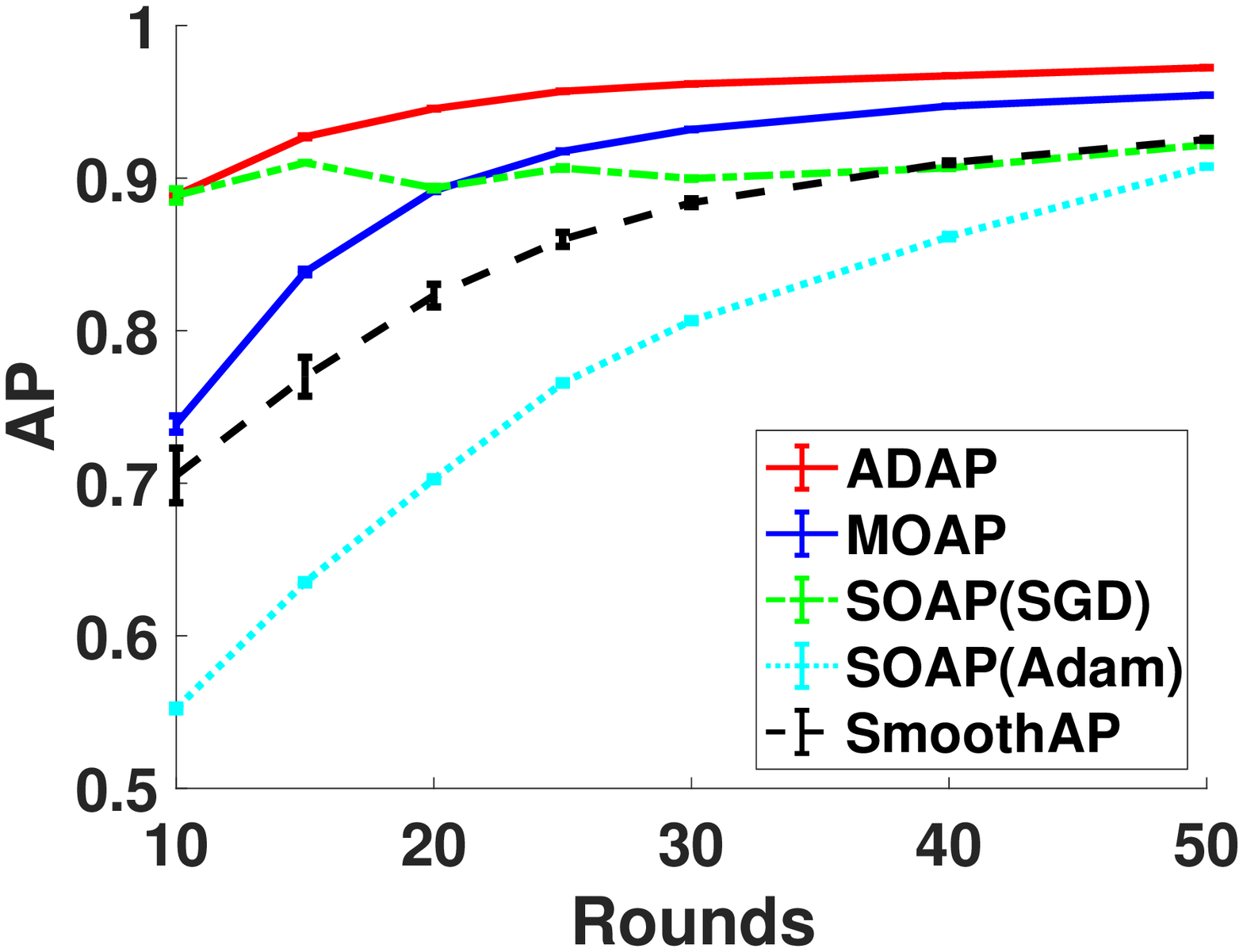}} 
\subfigure[w6a dataset]{
\label{Fig.sub.5}
\includegraphics[width=0.31\textwidth]{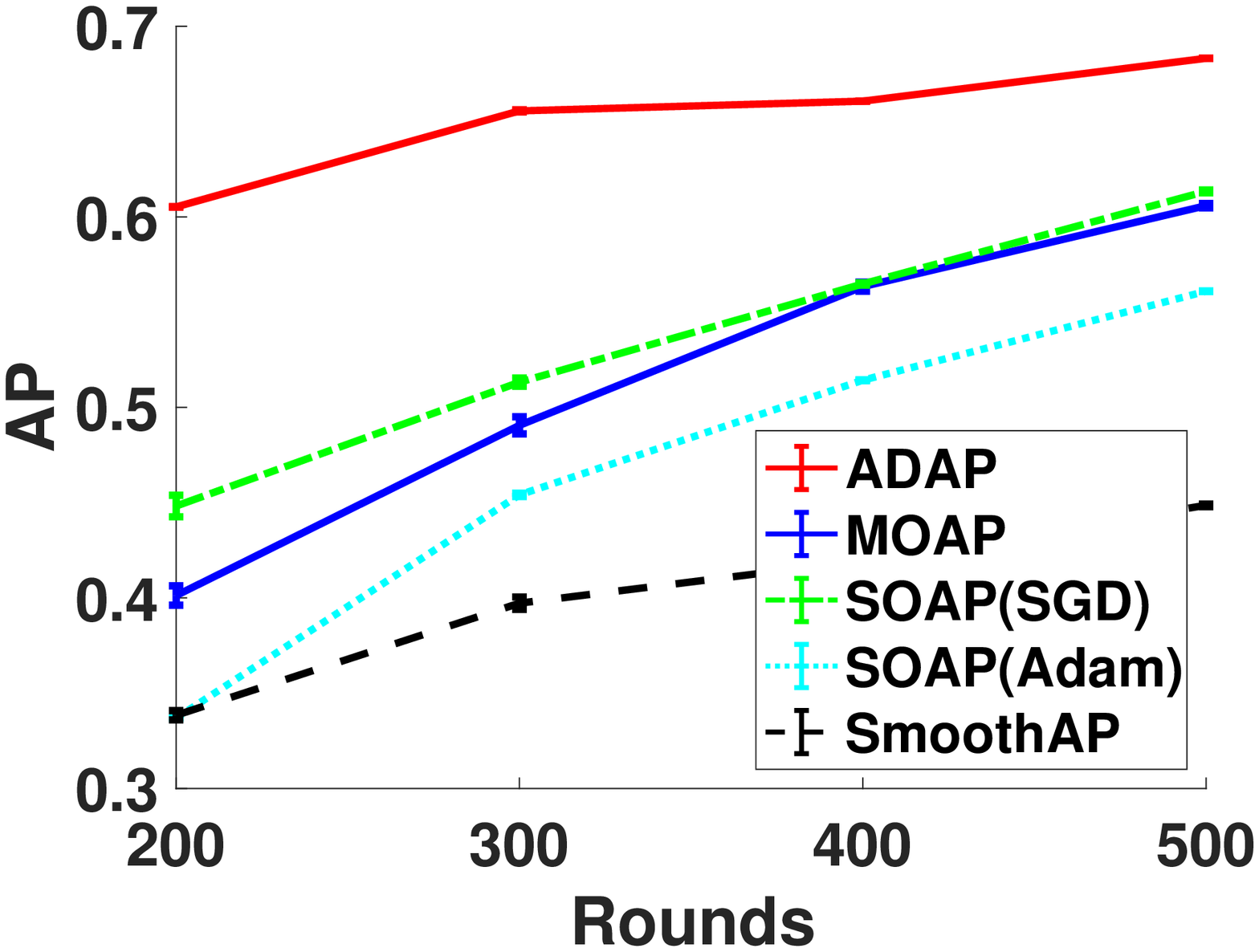}} 
\subfigure[a9a dataset]{
\label{Fig.sub.2}
\includegraphics[width=0.31\textwidth]{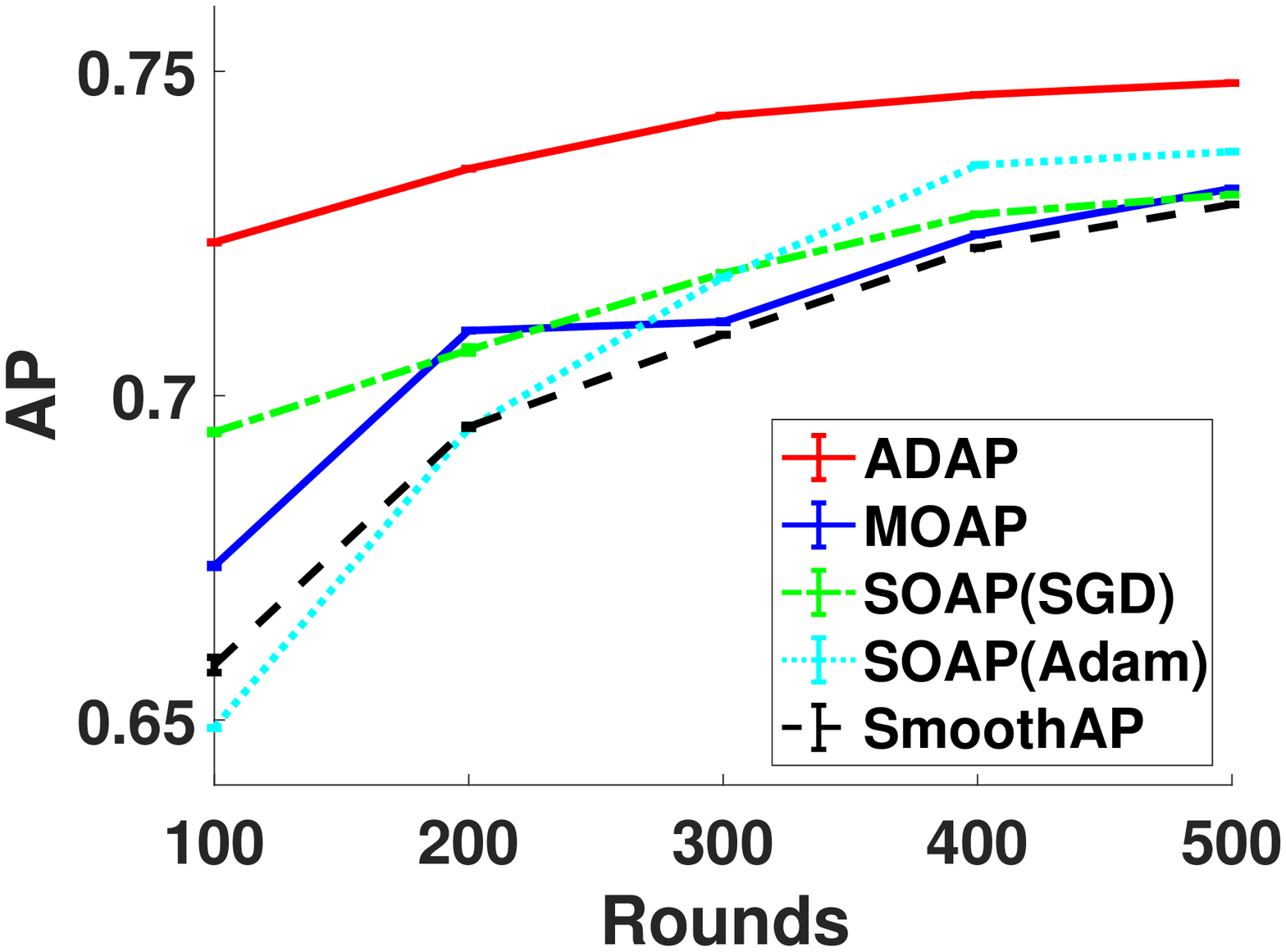}}
\subfigure[w8a dataset]{
\label{Fig.sub.1}
\includegraphics[width=0.31\textwidth]{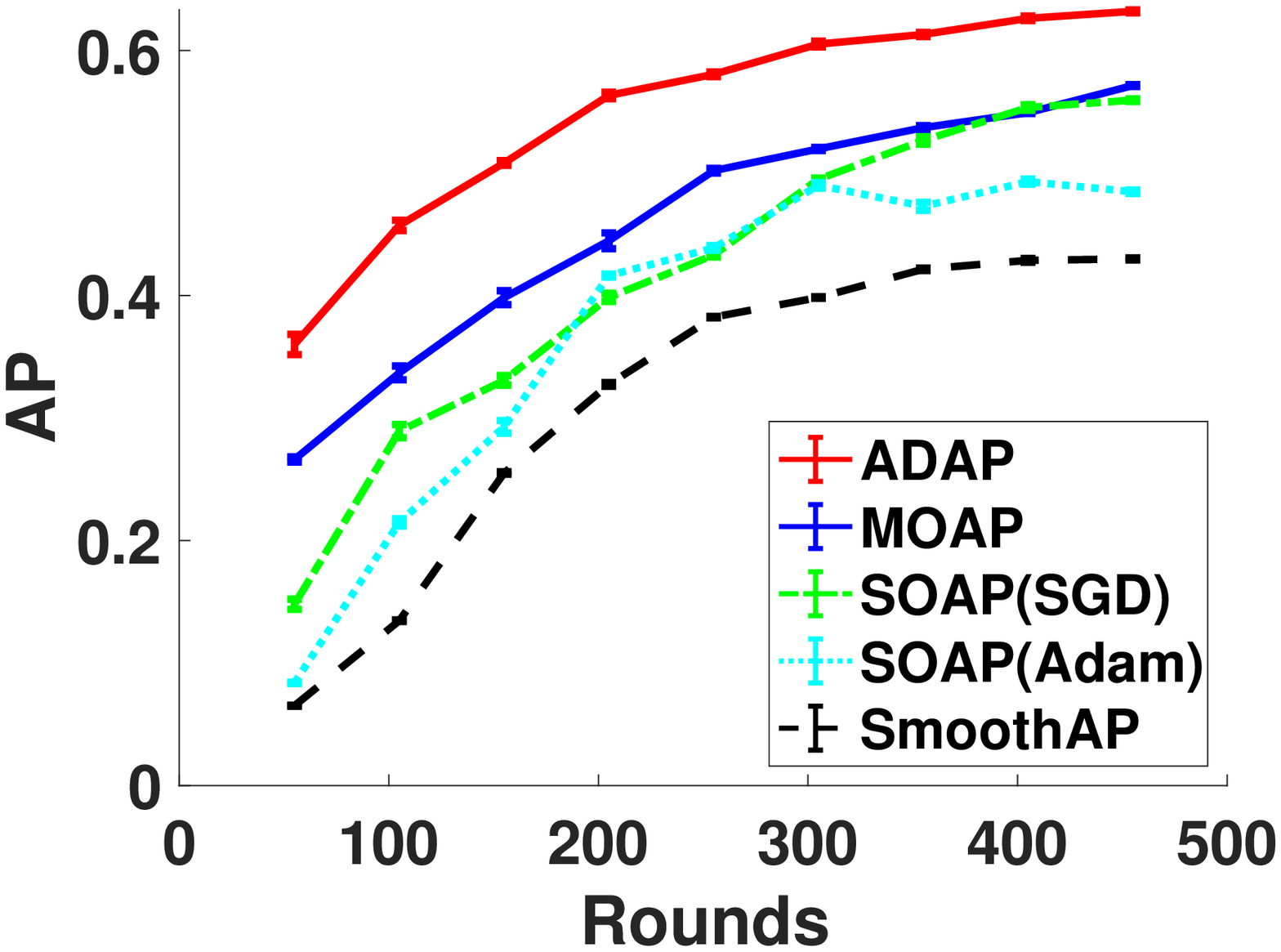}} 
\subfigure[ijcnn1 dataset]{
\label{Fig.sub.6}
\includegraphics[width=0.31\textwidth]{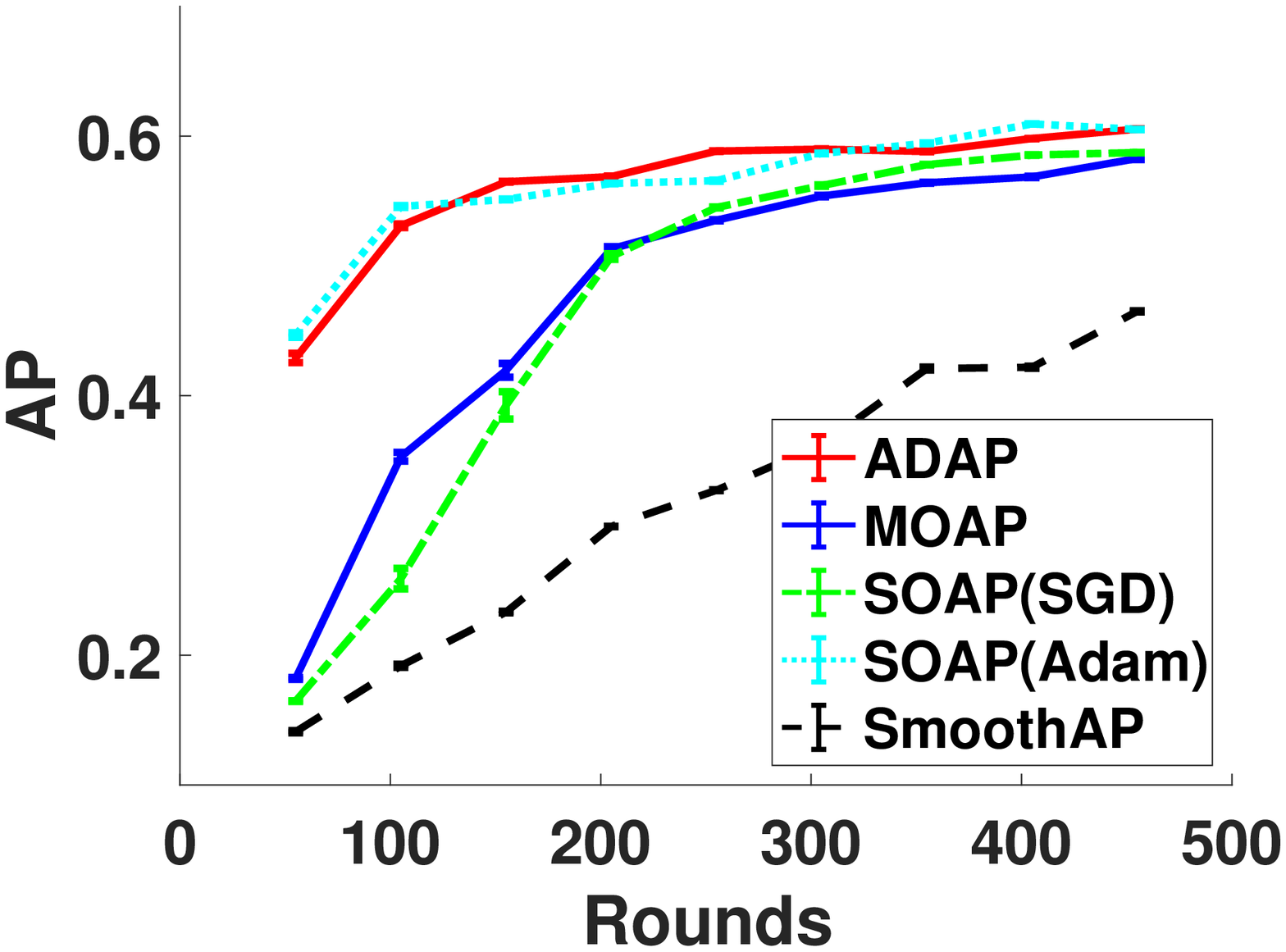}} 
\caption{AP vs \# of rounds  on the training set}
\label{Fig.main}
\end{figure*}

\paragraph{Results}. 
The convergence curves of AP on training examples are reported in Figure \ref{Fig.main}, and the final AP scores on the testing data are shown in Table \ref{exp:tab:3}. We also plot the Precision-Recall curves of the final models on testing data in Figure~\ref{Fig.main2}.  From Figure \ref{Fig.main}, we can see that the proposed ADAP converges faster than other methods. MOAP converges faster than SOAP on mushrooms, phishing, and w8a, and has similar performance as SOAP on other three datasets. From Table  \ref{exp:tab:3}, we can see that ADAP has the best performance on all data sets except ijcnn1, on which ADAP is similar to SOAP (Adam). These results verify the effectiveness of the proposed methods, in particular the adaptive method ADAP. 

\begin{table}
	\caption{Test AUPRC on CIFAR10 and CIFAR100.}	
	\label{exp:tab:4} 
	\centering
	{\begin{tabular}{l|c|c}
			\toprule
		 Method	&CIFAR10&CIFAR100\\
		\hline
	    ADAP &$\mathbf{0.7667\pm0.0015}$&$\mathbf{0.6371\pm 0.0041}$\\
	    SOAP (Adam) &$0.7629\pm 0.0014$&$0.6251\pm 0.0053$\\
	    SmoothAP& $0.7365\pm0.0088$&$0.6071\pm0.0064$ \\
		\bottomrule 
	\end{tabular}
}
\end{table}

\subsection{Training Deep Neural Networks}

Next, we present empirical results on optimizing deep neural networks. We mainly focus on comparing the performance of our proposed ADAP (Adam-style) algorithm with the two state-of-the-art algorithms, i.e., SmoothAP~\citep{DBLP:conf/eccv/BrownXKZ20} and SOAP (Adam)~\citep{qi2021stochastic}. 

\paragraph{Datasets.} Following~\cite{qi2021stochastic}, we conduct experiments on the imbalanced binary-class version of CIFAR10 and CIFAR100 datasets, which are constructed as follows:
Firstly, half of the classes in the original CIFAR10 and CIFAR100 datasets are designated to be the positive class, and the rest half of classes are considered to be the negative class. Then, we remove $98\%$ of the positive examples in the training set to make it imbalanced, while keeping the test set unchanged. Finally, the training data set is splited into train/validation sets as a $4:1$ ratio. 

\paragraph{Configurations.} We choose ResNet-18~\citep{he2016deep} to be the neural network for our imbalanced binary image  classification task. Before the training process, similar to~\citep{qi2021stochastic}, the ResNet-18 model is initialized with a model pretrained by the Adam optimizer for optimizing the cross entropy loss, whose learning rate and weight decay parameter are searched in $\{10^{-6},10^{-5},10^{-4}\}$. Then, the last layer of the neural network is  re-initialized and the network is trained by different AP maximization methods, with hyper-parameters individually tuned for fair comparison. For SOAP and ADAP, we tune the $\beta_0$ parameter in  range
$\{0.001,0.1,0.5,0.9, 0.99,0.999\}$, the margin parameter $m$ in $\{0.5, 1, 2, 5, 10\}$, and learning rate $\eta$ in $\{10^{-6},10^{-5},10^{-4}\}$.

\paragraph{Results} We conduct each experiment for 5 times, and report the mean test AUPRC as well as the standard variation in Table \ref{exp:tab:4}. We could observe that our proposed ADAP enjoys the best results on both datasets.
\section{CONCLUSION AND FUTURE WORK}
\label{futurework}
In this paper, we investigate the stochastic  optimization of AUPRC by maximizing a surrogate loss of AP, which is one of the most important performance metrics for imbalanced classification problems.    
Previous study has shown that an $O(1/\epsilon^5)$ convergence rate can be achieved. In this work, we further improve the theoretical guarantee by proposing a stochastic momentum method as well as a family of adaptive methods, which enjoy a better convergence rate of $O(1/\epsilon^4)$. Our essential ideas the  are two-folds: (i) updating the biased estimator for individual   ranking-scores-tracking in a randomized coordinate-wise manner; (ii)  applying the momentum step on top of the stochastic gradient estimator for tracking the gradient of the objective. 
Empirical studies on optimizing linear models and deep networks show the effectiveness of the proposed adaptive methods.  For future work, we consider applying the proposed algorithms to other deep learning tasks. 
\subsubsection*{Acknowledgements}
T. Yang was was partially supported by
NSF Career Award $\#$1844403, NSF Award $\#$2110545, NSF Award $\#$1933212. L. Zhang was partially supported by JiangsuSF (BK20200064).


\bibliographystyle{plainnat}
\bibliography{plain}
\newpage

\appendix

\section{NOTATIONS}
We first introduce the following lemma, which demonstrates the basic  properties of the functions in our optimization problem \eqref{eqn:optimization:problem}. 
\begin{lemma}
\label{lem:bounded:function:2}
Suppose Assumptions \ref{ass:bounded:ell} and \ref{ass:bounded:g} hold, then there exist $G, C_g, L_g, C_f, L_f, C_F, L_F>0$, such that $\forall \w,$ $\|g_i(\w)\|^2\leq G^2$, $g_i(\w)$ is $C_g$-Lipschitz and $L_g$-smooth, and $\forall \u\in\Omega,$ $f(\u)$ is $C_f$-Lipschitz and $L_f$-smooth. Finally, $\forall \w$, $F(\w)$ is $C_F$-Lipschiz and $L_F$-smooth. 
\end{lemma}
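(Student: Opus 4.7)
The plan is to prove each bound in order, propagating the constants upward from the atomic surrogate $\ell$ to the composite objective $F$.

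First I would establish the range condition $g_i(\w)\in\Omega$, which is used implicitly throughout the rest of the lemma (since the properties claimed for $f$ hold only on $\Omega$). Writing out coordinate-wise, $[g_i(\w)]_1=\sum_{j=1}^n \mathbb{I}(y_j=1)\ell(\w;\x_j,\x_i)\le mM$ by Assumption \ref{ass:bounded:ell}(iii), and $[g_i(\w)]_2=\sum_{j=1}^n \ell(\w;\x_j,\x_i)\le nM$. The lower bound on the second coordinate is the one place where Assumption \ref{ass:bounded:ell}(i) is essential: the diagonal term $\ell(\w;\x_i,\x_i)$ appears in the sum defining $[g_i(\w)]_2$, so $[g_i(\w)]_2\ge C$. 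Combining the two coordinate bounds immediately yields $\|g_i(\w)\|^2\le (mM)^2+(nM)^2=:G^2$.

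Next I would read off the Lipschitz and smoothness constants of $g_i$ from those of $\ell$. Since $g_i(\w)=\sum_{j=1}^n g(\w;\x_j,\x_i)$ and each $g(\cdot;\x_j,\x_i)$ has gradient bounded by $\sigma$ in norm (Assumption \ref{ass:bounded:g}), the triangle inequality gives $\|\nabla g_i(\w)\|\le n\sigma=:C_g$. For smoothness, each coordinate of $g(\w;\x_j,\x_i)$ is either $0$ or $\pm\ell(\w;\x_j,\x_i)$ (up to the indicator factor), so each summand is $L_\ell$-smooth and therefore $g_i$ is $nL_\ell$-smooth. These steps are essentially bookkeeping once the coordinate structure is made explicit.

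The key analytic step is the smoothness and Lipschitz estimate for $f(\u)=-u_1/u_2$ on $\Omega$. I would compute
\begin{equation*}
\nabla f(\u)=\Bigl[-\tfrac{1}{u_2},\,\tfrac{u_1}{u_2^2}\Bigr]^{\top},\qquad
\nabla^2 f(\u)=\begin{bmatrix}0 & 1/u_2^2\\ 1/u_2^2 & -2u_1/u_2^3\end{bmatrix}.
\end{equation*}
Using $u_2\ge C$ and $|u_1|\le Mm$ inside $\Omega$, each entry is bounded by an explicit function of $C,M,m$, yielding constants $C_f$ and $L_f$ depending only on these. The constraint $u_2\ge C$ is what makes this quantitative; without it the Hessian would blow up.

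Finally, for $F(\w)=\frac{1}{m}\sum_{i}f(g_i(\w))$, I would apply the standard chain-rule identities for compositions of Lipschitz-and-smooth maps. For the gradient, $\|\nabla F(\w)\|\le \frac{1}{m}\sum_i \|\nabla g_i(\w)\|\,\|\nabla f(g_i(\w))\|\le C_g C_f=:C_F$, which is valid because we already showed $g_i(\w)\in\Omega$. For smoothness, the usual expansion gives
\begin{equation*}
\|\nabla F(\w)-\nabla F(\w')\|\le \tfrac{1}{m}\sum_i\bigl(C_f L_g+C_g^2 L_f\bigr)\|\w-\w'\|,
\end{equation*}
so $L_F=C_f L_g+C_g^2 L_f$. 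The main obstacle, if any, is being careful to invoke $g_i(\w)\in\Omega$ for every trajectory point before quoting the properties of $f$; once the range condition is in hand, the rest is a routine propagation of constants.
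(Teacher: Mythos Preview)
Your proposal is correct and follows essentially the same route as the paper: verify $g_i(\w)\in\Omega$, bound $\|g_i\|$, $C_g$, $L_g$ from the sum structure and Assumptions~\ref{ass:bounded:ell}--\ref{ass:bounded:g}, bound $C_f$, $L_f$ by elementary calculus on $f(\u)=-u_1/u_2$ over $\Omega$, and then chain-rule to $C_F=C_gC_f$, $L_F=C_fL_g+C_g^2L_f$. The only cosmetic differences are that the paper bounds $C_g,L_g$ coordinate-wise (obtaining $\sqrt{m^2+n^2}\,\sigma$ and $\sqrt{m^2+n^2}\,L_\ell$) rather than via your single triangle inequality, and bounds $L_f$ by directly estimating $\|\nabla f(\u)-\nabla f(\u')\|$ instead of the Hessian; neither changes the argument.
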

\begin{proof}
For $g_i(\w)$, we have $\| g_i(\w)\|^2\leq m^2M^2+n^2M^2=G^2$,
 $\|\nabla g_i(\w)\|^2\leq m^2\sigma^2 + n^2\sigma^2 = C_g^2$, and $\|\nabla g_i(\w) - \nabla g_i(\w')\|^2\leq m^2L_l^2\|\w - \w'\|^2 + n^2L_l^2\|\w- \w'\|^2 = L_g^2 \|\w - \w'\|^2$. Similarly, for $f(\u)$, we have  $\|\nabla f(\u)\|_2^2 =\| (\frac{-1}{u_2},  \frac{u_1}{u_2^2})\|^2\leq 1/C^2+  M^2m^2/C^2 = C_f^2$ and $\|\nabla f(\u) - \nabla f(\u')\|_2^2 =\| (\frac{-1}{u_2} - \frac{-1}{u_2'},  \frac{u_1}{u_2^2} - \frac{u_1'}{u_2^{'2}}) \|^2\leq \frac{(u_2 - u_2')^2}{C^2}+  \frac{2(u_1 - u_1')^2}{C^4} + \frac{8(Mm)^2(Mn)^2(u_2 - u_2')^2}{C^8}\leq L_f^2 (\|\u - \u'\|^2)$ for any $\u, \u'\in\Omega$. Hence, $f(g_i(\w))$ is $C_F=(C_gC_f)$-Lipschitz, and 
  $L_F= (C_g^2L_f + L_gC_f)$-smooth. Since $F(\w)$ is the average of $f(g_i(\w))$, it is also $C_F$-Lipschitz and $L_F$-smooth. 
\end{proof}

\section{PROOF OF THEOREM \ref{thm:msam:v1}}
Let $\Delta_t=\m_{t+1}-\nabla F(\w_t).$ According to the update rule in Algorithm \ref{alg:AP-MSGD-V1}, and we have
\begin{equation}
\begin{split}
\label{eqn:appendix:theorem:1}
\|\Delta_t\|^2=&\left\|  (1-\beta_1)\m_t+\beta_1 \frac{1}{m}\sum_{i=1}^m  \nabla\tilde{g}_{i}(\w_t)^{\top}\cdot \nabla f\left([U_{t+1}]^{\top}_i\right)-\nabla F(\w_t)\right\|^2\\
=&\Bigg\| \underbrace{(1-\beta_1) [\m_t-\nabla F(\w_{t-1})]}_{A_1} +\underbrace{(1-\beta_1) [\nabla  F(\w_{t-1})-\nabla  F(\w_t)]}_{A_2}\\
& +\underbrace{\beta_1 \left[ \frac{1}{m}\sum_{i=1}^m  \nabla \tilde{g}_{i}(\w_t)^{\top}\cdot \nabla f\left([U_{t+1}]^{\top}_i\right)- \frac{1}{m}\sum_{i=1}^m  \nabla \tilde{g}_{i}(\w_t)^{\top}\cdot \nabla f\left(g_i(\w_t)\right)\right] }_{A_3} \\
&+\underbrace{\beta_1 \left[ \frac{1}{m}\sum_{i=1}^m  \nabla \tilde{g}_{i}(\w_t)^{\top}\cdot \nabla f\left(g_i(\w_t)\right) - \nabla F(\w_t) \right]}_{A_4} \Bigg\|^2.
\end{split}	
\end{equation}
Let $\E_t[\cdot]=\E[\cdot|\F_{t-1}]$, where $\F_{t-1}$ is the $\sigma$-algebra of MOAP-V1, which contains the learning history from round 1 to round $t-1$. Since $\tilde{g}_t(\w)$ is an unbiased estimator of $g_i(\w)$, we have $\E_t[A_4]=0$. Moreover, $A_1$ and $A_2$ are fixed given $\F_{t-1}$. Thus
\begin{equation}
\begin{split}
\label{eqn:appendix:theorem:A1}
{} &\E_t[\| \Delta_t\|^2]	\\
\leq {} &  \E_t[\| A_1+A_2+A_3\|^2] + \E_t[\|A_4\|^2]+2\E_t[\|A_3\|\|A_4\|]+\underbrace{2(A_1+A_2)^{\top}\E_t[A_4]}_{=0}\\   
\leq {}  & \E_t[\|A_1+A_2+A_3\|^2]+2\E_t[\|A_4\|^2]+\E_t[\|A_3\|^2]\\
\leq {} & (1+\beta_1)\E_t[\|A_1\|^2]+\left(1+\frac{1}{\beta_1}\right)\E_t[\|A_2+A_3\|^2]+2\E_t[\|A_4\|^2]+\E_t[\|A_3\|^2]\\
\leq {} & {(1+\beta_1^2)}(1-\beta_1)\|\Delta_{t-1}\|^2+\frac{2(1+\beta_1)}{\beta_1}\|A_2\|^2+\frac{2+3\beta_1}{\beta_1}\E_t[\|A_3\|^2]+2\E_t[\|A_4\|^2],
\end{split}	
\end{equation}
where the third inequality is based on Young's inequality. To proceed, according to Lemma \ref{lem:bounded:function:2}, we have 
\begin{equation}
\label{eqn:appendix:theorem:A2}
\|A_2\|^2=(1-\beta_1)^2\| \nabla  F(\w_{t-1})-\nabla  F(\w_t)\|^2 \leq (1-\beta_1)^2 L^2_F \|\w_t-\w_{t-1}\|^2,
\end{equation}
\begin{equation}
\begin{split}
\label{eqn:appendix:theorem:A3}
	\|A_3\|^2 \leq {} & \beta_1^2 \frac{1}{m}\sum_{i=1}^m\|  \nabla \tilde{g}_{i}(\w_t)^{\top}\cdot \nabla f\left([U_{t+1}]^{\top}_i\right)-  \nabla \tilde{g}_{i}(\w_t)^{\top}\cdot \nabla f\left(g_i(\w_t)\right)\|^2\\
	\leq {} & \beta_1^2 \frac{C_g^2}{m}\sum_{i=1}^m \| \nabla f([U_{t+1}]^{\top}_i)-\nabla f(g_i(\w_t)) \|^2\\
	\leq {} & \beta_1^2 \frac{C^2_gL^2_f}{m} \sum_{i=1}^m \| [U_{t+1}]_i^{\top} -g_i(\w_t)\|^2
	= {}  \frac{\beta_1^2C_g^2L^2_f}{m}\| U_{t+1}-g(\w_t)\|^2,
\end{split}
\end{equation}
where $g(\w_t)=[g_1(\w_t),\dots, g_m(\w_t)]^{\top}$, and 
 \begin{equation}
 \label{eqn:appendix:theorem:A4}
 \E_t[\|A_4\|^2]\leq {} 	\beta_1^2 \E_t\left[ \left\| \frac{1}{m}\sum_{i=1}^m \nabla  \tilde{g}_{i}(\w_t)^{\top}\cdot \nabla f([U_{t+1}]_i^{\top})   \right\|^2 \right]\leq \beta_1^2 C_g^2C_f^2.
 \end{equation}
 Combining 
 \eqref{eqn:appendix:theorem:A1},
 \eqref{eqn:appendix:theorem:A2}, \eqref{eqn:appendix:theorem:A3},  \eqref{eqn:appendix:theorem:A4}, we get 
 \begin{equation}
 	\begin{split}
 \E_t[\| \Delta_t\|^2]\leq {} &  (1-\beta_1)\|\Delta_{t-1}\|^2 + \frac{2L_F^2\|\w_t-\w_{t-1}\|^2}{\beta_1}+\frac{5\beta_1 C_g^2L^2_f\E_t[\|U_{t+1}-g(\w_{t})\|^2]}{m}\\
 {} &  + 2\beta_1^2  C_g^2C_f^2.
 	\end{split}
 \end{equation}
 Summing over from 1 to $T$, we get 
\begin{equation}
\begin{split}
\E\left[\sum_{i=1}^T\|\Delta_i\|^2\right]\leq & \frac{\|\Delta_0\|^2}{\beta_1}+\frac{2L_F^2\sum_{i=1}^T\|\w_i-\w_{i-1}\|^2}{\beta_1^2}+\frac{5C_g^2L^2_f
\sum_{i=1}^T\E_i[\|U_{i+1}-g(\w_{i})\|^2]}{m} \\
&+ 2\beta_1  C_g^2C_f^2T\\
= & \frac{2C_F^2}{\beta_1}+\frac{2L_F^2\eta^2\sum_{i=1}^T\|\m_i\|^2}{\beta^2}+\frac{5C_g^2L^2_f
\sum_{i=1}^T\E_i[\|U_{i+1}-g(\w_{i})\|^2]}{m} \\
&+ 2\beta_1  C_g^2C_f^2T
\end{split}	
\end{equation}
Note that here $\w_0$ can be considered as a psudo-decision which follows the same update procedure as $\w_t$. Next, we turn to bound the third term of the R.H.S. of the inequality, and introduce the following lemma. 
\begin{lemma} [Variance Recursion~\cite{,wang2017stochastic}] \label{lem:variance:recursion}
Consider a sequence $\m_{t+1}=(1-\beta)\m_t+\beta \tilde{h}(\w_t)$ for tracking $h(\w_t)$, where $\E[\tilde{h}(\w_t)]=h(\w_t)$ and $h$ is a $C$-Lipchitz continuous mapping. Then we have 
\begin{equation}
\E_t[\| \m_{t+1}-h(\w_t)\|^2]\leq (1-\beta) \| \m_{t}-h(\w_{t-1})\|^2+\beta^2 \E_t[\|\tilde{h}(\w_t)-h(\w_t)\|^2]+\frac{2C^2\|\w_t-\w_{t-1}\|^2}{\beta}.
\end{equation}
\end{lemma}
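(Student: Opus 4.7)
The plan is to prove this via a standard bias–variance decomposition followed by Young's inequality to convert a noise term at the "current" point $\w_t$ into one at the "previous" point $\w_{t-1}$, with the mismatch absorbed by Lipschitz continuity of $h$.

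First I would rewrite the tracking error at time $t+1$ by inserting and subtracting $h(\w_t)$:
\begin{equation*}
\m_{t+1} - h(\w_t) \;=\; (1-\beta)\bigl(\m_t - h(\w_t)\bigr) \;+\; \beta\bigl(\tilde h(\w_t) - h(\w_t)\bigr).
\end{equation*}
Conditioned on $\F_{t-1}$, the quantity $\m_t - h(\w_t)$ is deterministic (since $\w_t$ is measurable w.r.t.\ $\F_{t-1}$), and by the unbiasedness assumption $\E_t[\tilde h(\w_t) - h(\w_t)] = 0$. Taking $\E_t[\|\cdot\|^2]$ of the identity above, the cross term vanishes and I obtain
\begin{equation*}
\E_t\bigl[\|\m_{t+1} - h(\w_t)\|^2\bigr] \;=\; (1-\beta)^2 \|\m_t - h(\w_t)\|^2 \;+\; \beta^2 \, \E_t\bigl[\|\tilde h(\w_t) - h(\w_t)\|^2\bigr].
\end{equation*}

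Next I would replace $h(\w_t)$ by $h(\w_{t-1})$ inside the first term, since the bound we want on the RHS features $\|\m_t - h(\w_{t-1})\|^2$. Using Young's inequality with parameter $\alpha>0$,
\begin{equation*}
\|\m_t - h(\w_t)\|^2 \;\le\; (1+\alpha)\|\m_t - h(\w_{t-1})\|^2 \;+\; (1+\alpha^{-1})\|h(\w_{t-1}) - h(\w_t)\|^2.
\end{equation*}
I would choose $\alpha=\beta/(1-\beta)$, which gives $(1-\beta)^2(1+\alpha)=1-\beta$ and $(1-\beta)^2(1+\alpha^{-1}) = (1-\beta)^2/\beta \le 1/\beta$. (A slightly looser choice gives the stated constant $2$.) Finally, the $C$-Lipschitz property of $h$ converts $\|h(\w_{t-1}) - h(\w_t)\|^2$ to $C^2\|\w_t - \w_{t-1}\|^2$, and combining all pieces yields
\begin{equation*}
\E_t\bigl[\|\m_{t+1} - h(\w_t)\|^2\bigr] \;\le\; (1-\beta)\|\m_t - h(\w_{t-1})\|^2 \;+\; \beta^2\,\E_t\bigl[\|\tilde h(\w_t) - h(\w_t)\|^2\bigr] \;+\; \frac{2C^2}{\beta}\|\w_t - \w_{t-1}\|^2,
\end{equation*}
which is exactly the claim.

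There is really no single hard step here: the only delicate point is choosing the Young's inequality parameter so that the contraction factor $(1-\beta)^2$ collapses to $(1-\beta)$ without introducing an extra $O(1)$ multiplicative constant in front of $\|\m_t - h(\w_{t-1})\|^2$, which would destroy the telescoping behavior needed when this lemma is invoked. The $1/\beta$ blow-up in the Lipschitz remainder is unavoidable and matches the statement; in practice it is the tension between this $1/\beta$ and the step-size-squared term $\|\w_t - \w_{t-1}\|^2 = \eta^2\|\m_t\|^2$ that forces the relationship $\eta = O(\beta)$ appearing in Theorems \ref{thm:3} and \ref{thm:4}.
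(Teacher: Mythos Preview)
Your proof is correct. The paper does not actually prove this lemma---it is quoted from \cite{wang2017stochastic} and invoked as a black box---so there is no ``paper's own proof'' to compare against. That said, your argument is exactly the standard one, and it mirrors the decomposition the paper carries out by hand in the proof of Theorem~\ref{thm:msam:v2} (around \eqref{eqn:th2:356}), where Young's inequality is applied with parameter $\alpha=\beta$ rather than your $\alpha=\beta/(1-\beta)$; that looser choice is what produces the constant $2$ in the statement, whereas your choice in fact gives the sharper $(1-\beta)^2/\beta \le 1/\beta$.
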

With Lemma \ref{lem:variance:recursion}, we have 
\begin{equation}
\label{eqn:appendix:theorem:1:2}
\sum_{i=1}^T \E_i[\|U_{i+1}-g(\w_i)\|^2]\leq \frac{mG^2}{\beta_0}	+2\beta_0 T mG^2+\frac{2 mC^2_{g}\eta^2\sum_{i=1}^T\|\m_i\|^2}{\beta_0^2}.
\end{equation}
Combining all the about results together, and set $\beta_0=\beta_1=\beta$, we have
\begin{equation}
\begin{split}
\E\left[\sum_{i=1}^T \| \Delta_i\|^2\right]\leq {} & \frac{2C_F^2+5C_g^2L_f^2G^2}{\beta}+\frac{(2L_F^2+10C_g^4L_f^2)\eta^2}{\beta^2}\sum_{i=1}^T\|\m_i\|^2 \\
{} &+	\beta(10C_g^2L_f^2G^2+2C_g^2C_f^2)T.
\end{split}	
\end{equation}
We introduce the following lemma \cite{guo2021stochastic}, which can be obtained following the definition of the smoothness.  
\begin{lemma}
\label{Lemma:smooth}
Consider a sequence $\w_{t+1}=\w_t-\eta \m_{t+1}$ for a $L_F$-smooth function $F$, with $\eta L_F\leq 1/2$ we have 
$$F(\w_{t+1})\leq F(\w_t)+\frac{\eta}{2}\|\nabla F(\w_t)-\m_{t+1}\|^2-\frac{\eta}{2}\| \nabla F(\w_t)\|^2-\frac{\eta}{4}\|\m_{t+1}\|^2.$$	
\end{lemma}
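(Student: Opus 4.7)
The statement to prove is a standard descent inequality for a smooth function under a generic momentum-style update $\w_{t+1}=\w_t-\eta\m_{t+1}$, where $\m_{t+1}$ is \emph{not} assumed to be the true gradient. The plan is to start from the $L_F$-smoothness expansion of $F$ around $\w_t$, then absorb the cross term $\langle \nabla F(\w_t),\m_{t+1}\rangle$ via the polarization identity, and finally use the step-size condition $\eta L_F\leq 1/2$ to turn the leftover $\|\m_{t+1}\|^2$ coefficient into the clean $-\eta/4$ term.

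Concretely, I would first invoke the $L_F$-smoothness of $F$ (established in Lemma~\ref{lem:bounded:function:2}) to write
\[
F(\w_{t+1})\leq F(\w_t)+\langle \nabla F(\w_t),\w_{t+1}-\w_t\rangle+\frac{L_F}{2}\|\w_{t+1}-\w_t\|^2,
\]
and then substitute the update rule $\w_{t+1}-\w_t=-\eta\m_{t+1}$. This yields a bound of the form $F(\w_t)-\eta\langle\nabla F(\w_t),\m_{t+1}\rangle+\tfrac{L_F\eta^2}{2}\|\m_{t+1}\|^2$.

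The second step is to rewrite the inner product using the identity $\langle a,b\rangle=\tfrac12(\|a\|^2+\|b\|^2-\|a-b\|^2)$ with $a=\nabla F(\w_t)$ and $b=\m_{t+1}$. This simultaneously produces the three quantities appearing in the target inequality: $\|\nabla F(\w_t)\|^2$, $\|\m_{t+1}\|^2$, and $\|\nabla F(\w_t)-\m_{t+1}\|^2$. After collecting terms, the coefficient of $\|\m_{t+1}\|^2$ becomes $\tfrac{L_F\eta^2}{2}-\tfrac{\eta}{2}=\tfrac{\eta}{2}(L_F\eta-1)$.

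The third and final step is to apply the hypothesis $\eta L_F\leq 1/2$, which gives $L_F\eta-1\leq -1/2$, hence the coefficient of $\|\m_{t+1}\|^2$ is at most $-\eta/4$. The remaining two terms $+\tfrac{\eta}{2}\|\nabla F(\w_t)-\m_{t+1}\|^2$ and $-\tfrac{\eta}{2}\|\nabla F(\w_t)\|^2$ appear directly from the polarization step. There is no real obstacle here; the only thing that needs care is the sign bookkeeping so that the quadratic term in $\m_{t+1}$ ends up negative (which requires exactly the stated step-size condition), and noting that the inequality is \emph{deterministic}, so no expectations or filtration arguments enter this lemma. The result will be used later to combine with the variance recursion (Lemma~\ref{lem:variance:recursion}) and the bound on $\E[\sum_t\|\Delta_t\|^2]$ developed earlier in the proof of Theorem~\ref{thm:msam:v1}.
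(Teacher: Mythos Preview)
Your proposal is correct and is precisely the standard argument: the paper does not actually spell out a proof of this lemma but simply cites \cite{guo2021stochastic} and remarks that it ``can be obtained following the definition of the smoothness,'' which is exactly the smoothness-expansion plus polarization-identity route you describe. Your sign bookkeeping and use of $\eta L_F\leq 1/2$ to turn $\tfrac{\eta}{2}(L_F\eta-1)\|\m_{t+1}\|^2$ into $-\tfrac{\eta}{4}\|\m_{t+1}\|^2$ are all in order.
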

Based on Lemma \ref{Lemma:smooth}, with $(2L_F^2+10C_g^4L_f^2)\eta^2/\beta^2\leq 1/2$, we get 
\begin{equation}
\begin{split}
	\E\left[\frac{1}{T}\sum_{t=1}^T\|\nabla F_{\w}(\w_t)\|^2\right]\leq {} &\frac{2( F(\w_{T+1})-F(\w_1))}{\eta T}+ \frac{C_F^2+5C_g^2L_f^2G^2}{\beta T}\\
	{} & +\beta(10C_g^2L_f^2G^2+2C_g^2C_f^2).
\end{split}	
\end{equation}
Thus, by setting $\beta=O(\epsilon^2)$ and $\eta=O(\beta)$, we have 
$\E\left[\frac{1}{T}\sum_{t=1}^T\|\nabla F_{\w}(\w_t)\|^2\right]=O(\epsilon^2)$ for $$T\geq \max \left\{\frac{2(F(\w_{1})-F(\w_{T+1}))}{\eta \epsilon^2},\frac{C_F^2+10C_g^2L_f^2G^2}{\beta\epsilon^2}\right\}.$$
\section{PROOF OF THEOREM \ref{thm:msam:v2}}
Let $i_t$ be the positive data chosen in round $t$, and for convenience we set $\beta_0=\beta_1=\beta$. According to the update rule in Algorithm \ref{alg:AP-MSGD-V1}, and we have
\begin{equation}
\begin{split}
\|\Delta_t\|^2=&\left\|  (1-\beta)\m_t+\beta  \nabla\tilde{g}_{i_t}(\w_t)^{\top}\cdot \nabla f\left([U_{t+1}]^{\top}_{i_t}\right)-\nabla F(\w_t)\right\|^2\\
=&\Bigg\| \underbrace{(1-\beta) [\m_t-\nabla F(\w_{t-1})]}_{A_1} +\underbrace{(1-\beta) [\nabla  F(\w_{t-1})-\nabla  F(\w_t)]}_{A_2}\\
& +\underbrace{\beta \left[\nabla \tilde{g}_{i_t}(\w_t)^{\top}\cdot \nabla f\left([U_{t+1}]^{\top}_{i_t}\right)- \nabla \tilde{g}_{i_t}(\w_t)^{\top}\cdot \nabla f\left(g_{i_t}(\w_t)\right)\right] }_{A_3} \\
&+\underbrace{\beta \left[ \nabla \tilde{g}_{i_t}(\w_t)^{\top}\cdot \nabla f\left(g_{i_t}(\w_t)\right) - \nabla F(\w_t) \right]}_{A_4} \Bigg\|^2.
\end{split}	
\end{equation}
Thus 
\begin{equation}
\begin{split}
{} &\E_t[\| \Delta_t\|^2]	
\leq (1-\beta)\|\Delta_{t-1}\|^2+\frac{2(1+\beta)}{\beta}\|A_2\|^2+\frac{2+3\beta}{\beta}\E_t[\|A_3\|^2]+2\E_t[\|A_4\|^2],
\end{split}	
\end{equation}
where 
\begin{equation}
\|A_2\|^2=(1-\beta_1)^2\| \nabla  F(\w_{t-1})-\nabla  F(\w_t)\|^2 \leq (1-\beta_1)^2 L^2_F \|\w_t-\w_{t-1}\|^2,
\end{equation}
\begin{equation}
\begin{split}
	\|A_3\|^2 \leq {} & \beta^2 \|  \nabla \tilde{g}_{i_t}(\w_t)^{\top}\cdot \nabla f\left([U_{t+1}]^{\top}_{i_t}\right)-  \nabla \tilde{g}_{i_t}(\w_t)^{\top}\cdot \nabla f\left(g_{i_t}(\w_t)\right)\|^2\\
	\leq {} & \beta^2 {C_g^2}\| \nabla f([U_{t+1}]^{\top}_{i_t})-\nabla f(g_{i_t}(\w_t)) \|^2\\
	\leq {} & \beta^2 {C^2_gL^2_f}  \| [U_{t+1}]_{i_t}^{\top} -g_{i_t}(\w_t)\|^2,
\end{split}
\end{equation}
and 
 \begin{equation}
\E[ \|A_4\|^2]\leq {} 	\beta^2  \E[\| \nabla  \tilde{g}_{i_t}(\w_t)^{\top}\cdot \nabla f([U_{t+1}]_{i_t}^{\top})   \|^2]\leq \beta_1^2 C_g^2C_f^2.
 \end{equation}
Combining the results above, we get 
 \begin{equation}
 	\begin{split}
 	\label{eqn:th2:35}
 \E[\| \Delta_t\|^2]\leq {} &  (1-\beta)\|\Delta_{t-1}\|^2 + \frac{2L_F^2\|\w_t-\w_{t-1}\|^2}{\beta_1}+{5\beta C_g^2L^2_f\E[\|[U_{t+1}]_{i_t}-g_{i_t}(\w_{t})\|^2]}\\
 {} &  + 2\beta^2  C_g^2C_f^2.
 	\end{split}
 \end{equation}
Next, we turn to bound the third term on the R.H.S. of \eqref{eqn:th2:35}. Following \cite{qi2021stochastic}, we divide the whole interval into $m$ groups, with the $i$-th group $\T_i=\{t_1^i,\dots, t_k^i,\dots\}$, where $t_k^i$ denotes the $k$-th time that the $i$-th positive data is chosen to update $[U_{t_k^i}]_i$ and obtain $[U_{t_k^i+1}]_i$. Note that since we only update the chosen data, we have $[U_{t_k^i+1}]_i=\dots=[U_{t_{k+1}^i}]_i$. Without loss of generality, assume $i_t$ is picked for the $k$-th time at round $t$. Thus
\begin{equation}
\begin{split}
\label{eqn:th2:356}
& \E[\|[U_{t+1}]_{i_t}-g_{i_t}(\w_{t})\|^2] \\
={} &  \E[\|[U_{t_{k}^{i_t}+1}]_{i_t}-g_{i_t}(\w_{t_{k}^{i_t}})\|^2]  \\
	\leq {} &\E[\|(1-\beta)[U_{t^{i_t}_{k-1}+1}]_{i_t}+\beta \tilde{g}_i(\w_{t^{i_t}_{k}})-g_{i_t}(\w_{t_{k}^{i_t}})\|^2]\\
	= {} & \E[\|(1-\beta)([U_{t_{k-1}^{i_t}+1}]_{i_t}-g_{i_t}(\w_{t_{k-1}^{i_t}}) )+ \beta (\tilde{g}_{i_t}(\w_{t_{k-1}^{i_t}})-g_{i_t}(\w_{t_{k-1}^{i_t}})) + g_{i_t}(\w_{t_{k-1}^{i_t}})-g_{i_t}(\w_{t_{k}^{i_t}})\|^2 ]\\
		\leq {} & (1+\beta)(1-\beta)^2\E[\|[U_{t_{k-1}^{i_t}+1}]_{i_t}-g_{i_t}(\w_{t_{k-1}^{i_t}})\|^2] + 2(1+1/\beta)\beta^2\cdot 2G^2 \\
{} &	+ 2(1+1/\beta) \E[\| g_{i_t}(\w_{t_{k-1}^{i_t}})-g_{i_t}(\w_{t_{k}^{i_t}})\|^2]\\
\leq {} & (1-\beta) \E[\|[U_{t_{k-1}^{i_t}+1}]_{i_t}-g_{i_t}(\w_{t_{k-1}^{i_t}})\|^2]+ 8\beta G^2+\frac{4C_g^2}{\beta} \E[\| \w_{t_{k-1}^{i_t}}-\w_{t_{k}^{i_t}}\|^2]\\
\leq {} &  (1-\beta) \E[\|[U_{t_{k-1}^{i_t}+1}]_{i_t}-g_{i_t}(\w_{t_{k-1}^{i_t}})\|^2]+ 8\beta G^2 + \frac{4C_g^2\eta^2}{\beta} \E \left[ \left\| \sum_{j=t^{i_t}_{k-1}+1}^{t_k^{i_t}} \m_t\right\|^2\right]\\
\leq {} &  (1-\beta) \E[\|[U_{t_{k-1}^{i_t}+1}]_{i_t}-g_{i_t}(\w_{t_{k-1}^{i_t}})\|^2]+ 8\beta G^2 + \frac{4C_g^2\eta^2}{\beta} \E\left[ (t_{k}^{i_t}-t_{k-1}^{i_t}) \sum_{j=t^{i_t}_{k-1}+1}^{t_k^{i_t}} \left\|  \m_j\right\|^2\right]
\end{split}	
\end{equation}
where the second and last inequalities are based on Young's inequality. Note that, based on Lemma 1 of \cite{qi2021stochastic}, $t_{k}^{i_t}-t_{k-1}^{i_t}$ is a random variable with conditional distribution given by a geometric distribution with $p=1/m$, i.e., $\E[t_{k}^{i_t}-t_{k-1}^{i_t}|t_{k-1}^{i_t}]\leq m$. However, since $t_{k}^{i_t}-t_{k-1}^{i_t}$ and $\sum_{j=t^{i_t}_{k-1}+1}^{t_k^{i_t}} \left\|  \m_j\right\|^2$ are \emph{dependent}, this conclusion can not be applied to bound the last term, which further making the advantages of the momentum can not be used. Because of this issue, we have to rewrite \eqref{eqn:th2:356} and bound the last term as 
\begin{equation}
\begin{split}
	\E\left[ \left\| \sum_{j=t^{i_t}_{k-1}+1}^{t_k^{i_t}} \m_t\right\|^2\right]\leq C_F^2\E[(t_{k}^{i_t}-t_{k-1}^{i_t})^2]\leq 2m^2C_F^2.
\end{split}	
\end{equation}
Thus 
 $$  \E[\|[U_{t_{k}^{i_t}+1}]_{i_t}-g_{i_t}(\w_{t_{k}^{i_t}})\|^2]\leq (1-\beta) \E[\|[U_{t_{k-1}^{i_t}}]_{i_t}-g_{i_t}(\w_{t_{k-1}^{i_t}})\|^2]+ 8\beta G^2 + \frac{8m^2C_F^2C_g^2\eta^2}{\beta}.$$
Summing over for all intervals, we have
$$ \sum_{i=1}^m \E[\sum_{k=1}^{K_i}\|[U_{t_{k-1}^{i}}]_{i}-g_{i}(\w_{t_{k-1}^{i}})\|^2]\leq \frac{mG^2}{\beta}+8 G^2 T + \frac{8m^2C_F^2C_g^2\eta^2T}{\beta^2}$$
where $K_i$ denotes that the $i$-th positive data is chosen for $K_i$ times.
Combining with \eqref{eqn:th2:35}, we get
\begin{equation}
\begin{split}
\E[\sum_{i=1}^T \|\Delta_i\|^2]\leq  \frac{2L_f^2\eta^2C_F^2T}{\beta^2} +\frac{40C_g^4L_f^2m^2C_F^2\eta^2T}{\beta}+
\frac{C_F^2}{\beta}+5C_g^2L_f^2mG^2+\beta T (40C_G^2L_f^2G^2+8C_g^2C_f^2).
\end{split}	
\end{equation}

With Lemma \ref{Lemma:smooth}, we obtain 
\begin{equation}
\begin{split}
\E\left[\frac{1}{T}\sum_{i=1}^T \|\nabla F(\w) \|^2\right]\leq {} &\frac{2(F(\w_1)-F(\w_{T+1}))}{\eta T} + 	\frac{2L_f^2\eta^2C_F^2}{\beta^2} +\frac{40C_g^4L_f^2m^2C_F^2\eta^2}{\beta}\\
{} &+ \frac{C_F^2}{\beta T}+\frac{5C_g^2L_f^2mG^2}{T}+\beta  (40C_G^2L_f^2G^2+8C_g^2C_f^2).
\end{split}
\end{equation}
Finally, we can obtain a $\epsilon$-stationary solution by setting $\beta=O(\epsilon^2 )$, $\eta=O(\epsilon^3/m)$, and 
$$T\geq \left\{ \frac{2(F(\w_1)-F(\w_{T+1}))}{\eta \epsilon^2} , \frac{C_F^2}{\beta \epsilon^2},\frac{5C_g^2L_f^2mG^2}{\epsilon^2} \right\}.$$
\section{PROOF OF THEOREM \ref{thm:3}}
\label{app:tho:3}
Let $\Delta_t=\m_{t+1}-\nabla F(\w_t).$ According to the update rule in Algorithm \ref{alg:AP-MSGD-V2}, and we have
\begin{equation}
\begin{split}
\label{eqn:appendix:theorem:1:2}
\|\Delta_t\|^2=&\left\|  (1-\beta_1)\m_t+\beta_1 \frac{1}{B}\sum_{\x_i\in\B_t}  \nabla\tilde{g}_{i}(\w_t)^{\top}\cdot \nabla f\left([U_{t+1}]^{\top}_i\right)-\nabla F(\w_t)\right\|^2\\
=&\Bigg\| \underbrace{(1-\beta_1) [\m_t-\nabla F(\w_{t-1})]}_{A_1} +\underbrace{(1-\beta_1) [\nabla  F(\w_{t-1})-\nabla  F(\w_t)]}_{A_2}\\
& +\underbrace{\beta_1 \left[ \frac{1}{B}\sum_{\x_i\in\B_t}  \nabla \tilde{g}_{i}(\w_t)^{\top}\cdot \nabla f\left([U_{t+1}]^{\top}_i\right)- \frac{1}{B}\sum_{\x_i\in\B_t}  \nabla \tilde{g}_{i}(\w_t)^{\top}\cdot \nabla f\left(g_i(\w_t)\right)\right] }_{A_3} \\
&+\underbrace{\beta_1 \left[ \frac{1}{B}\sum_{\x_i\in\B_t}  \nabla \tilde{g}_{i}(\w_t)^{\top}\cdot \nabla f\left(g_i(\w_t)\right) - \nabla F(\w_t) \right]}_{A_4} \Bigg\|^2.
\end{split}	
\end{equation}
Following similar procedure as in the proof of Theorem \ref{thm:msam:v1}, we have 
\begin{equation}
\begin{split}
\label{eqn:appendix:theorem:A1:3}
{} &\E_t[\| \Delta_t\|^2]	
\leq (1-\beta_1)\|\Delta_{t-1}\|^2+\frac{2(1+\beta_1)}{\beta_1}\|A_2\|^2+\frac{2+3\beta_1}{\beta_1}\E_t[\|A_3\|^2]+2\E_t[\|A_4\|^2],
\end{split}	
\end{equation}
where 
\begin{equation}
\label{eqn:appendix:theorem:A2:3}
\|A_2\|^2=(1-\beta_1)^2\| \nabla  F(\w_{t-1})-\nabla  F(\w_t)\|^2 \leq (1-\beta_1)^2 L^2_F \|\w_t-\w_{t-1}\|^2
\end{equation}
\begin{equation}
\begin{split}
\label{eqn:appendix:theorem:A3:3}
	\|A_3\|^2 \leq {} & \beta_1^2 \frac{1}{B}\sum_{\x_i\in\B_t}\|  \nabla \tilde{g}_{i}(\w_t)^{\top}\cdot \nabla f\left([U_{t+1}]^{\top}_i\right)-  \nabla \tilde{g}_{i}(\w_t)^{\top}\cdot \nabla f\left(g_i(\w_t)\right)\|^2\\
	\leq {} & \beta_1^2 \frac{C_g^2}{B}\sum_{\x_i\in\B_t} \| \nabla f([U_{t+1}]^{\top}_i)-\nabla f(g_i(\w_t)) \|^2\\
	\leq {} & \beta_1^2 \frac{C^2_gL^2_f}{B} \sum_{\x_i\in\B_t} \| [U_{t+1}]_i^{\top} -g_i(\w_t)\|^2
\end{split}
\end{equation}
and 
 \begin{equation}
 \label{eqn:appendix:theorem:A4:3}
 \E_t[\|A_4\|^2]\leq {} 	\beta_1^2 \E_t\left[\left\| \frac{1}{B}\sum_{\x_i\in\B_t}  \nabla  \tilde{g}_{i}(\w_t)^{\top}\cdot \nabla f([U_{t+1}]_i^{\top}) \right\|^2\right] \leq \beta_1^2 C_g^2C_f^2.
 \end{equation}
 Combining 
 \eqref{eqn:appendix:theorem:A1:3},
 \eqref{eqn:appendix:theorem:A2:3}, \eqref{eqn:appendix:theorem:A3:3},  \eqref{eqn:appendix:theorem:A4:3}, and taking expectation over the randomness of $\B_t$, we get 
 \begin{equation}
 	\begin{split}
 \E_t[\| \Delta_t\|^2]\leq {} &  (1-\beta_1)\|\Delta_{t-1}\|^2 + \frac{2L_F^2\|\w_t-\w_{t-1}\|^2}{\beta_1}+\frac{5\beta_1 C_g^2L^2_f\E_t[\|U_{t+1}-g(\w_{t})\|^2]}{m}\\
 {} &  + 2\beta_1^2  C_g^2C_f^2.
 	\end{split}
 \end{equation}
 Summing over from 1 to $T$ and taking the expectation with respect to all the randomness, we get 
\begin{equation}
\begin{split}
\E\left[\sum_{i=1}^T\|\Delta_i\|^2\right]\leq & \frac{\|\Delta_0\|^2}{\beta_1}+\frac{2L_F^2\sum_{i=1}^T\|\w_i-\w_{i-1}\|^2}{\beta_1^2}+\frac{5C_g^2L^2_f
\sum_{i=1}^T\E_i[\|U_{i+1}-g(\w_{i})\|^2]}{m} \\
&+ 2\beta_1  C_g^2C_f^2T\\
= & \frac{C_F^2}{\beta_1}+\frac{2L_F^2\eta^2\sum_{i=1}^T\|\m_i\|^2}{\beta^2}+\frac{5C_g^2L^2_f
\sum_{i=1}^T\E_i[\|U_{i+1}-g(\w_{i})\|^2]}{m} \\
&+ 2\beta_1  C_g^2C_f^2T.
\end{split}	
\end{equation}

Next, note that $U_{t+1}=\Pi_{\Omega^m}[(1-\beta_0)U_t + \beta_0 \widehat{g}(\w_t)]$, where $\widehat{g}(\w_t)$ is an unbiased estimator of $g(\w_t)$, which is defined in \eqref{eqn:cord}. Thus, with Lemma \ref{lem:variance:recursion} and the fact that 
$$\E[\|\widehat{g}(\w_t)-g(\w_t)\|^2]\leq \E[\|\widehat{g}(\w_t)\|^2]= \E\left[\sum_{\x_i\in \B_t}\left\| \frac{m}{B}\tilde{g}_i(\w)\right\|^2\right]\leq \frac{m^2}{B}G^2.$$
we have 
\begin{equation}
\label{eqn:appendix:theorem:1:2:3}
\sum_{i=1}^T \E_i[\|U_{i+1}-g(\w_i)\|^2]\leq \frac{mG^2}{\beta_0}	+2\beta_0 T \frac{m^2}{B}G^2+\frac{2 mC^2_{g}\eta^2\sum_{i=1}^T\|\m_i\|^2}{\beta_0^2}.
\end{equation}
Combining all the about results together, and set $\beta_0=\beta_1=\beta$, we have
\begin{equation}
\begin{split}
\E\left[\sum_{i=1}^T \| \Delta_i\|^2\right]\leq {} & \frac{C_F^2+5C_g^2L_f^2G^2}{\beta}+\frac{(2L_F^2+10C_g^4L_f^2)\eta^2}{\beta^2}\sum_{i=1}^T\|\m_i\|^2 \\
{} &+	\beta(10C_g^2L_f^2G^2\frac{m}{B}+2C_g^2C_f^2)T.
\end{split}	
\end{equation}
Based on Lemma \ref{Lemma:smooth}, with $(2L_F^2+10C_g^4L_f^2)\eta^2/\beta^2\leq 1/2$, we get 
\begin{equation}
\begin{split}
	\E\left[\frac{1}{T}\sum_{t=1}^T\|\nabla F(\w_t)\|^2\right]\leq {} &\frac{2( F(\w_{1})-F(\w_T))}{\eta T}+ \frac{C_F^2+5C_g^2L_f^2G^2}{\beta T}\\
	{} & +\beta(5C_g^2L_f^2G^2\frac{m}{B}+2C_g^2C_f^2).
\end{split}	
\end{equation}
Thus, by setting $\beta=O(\epsilon^2B/m)$ and $\eta=O(\beta)$, we have 
$\E\left[\frac{1}{T}\sum_{t=1}^T\|\nabla F_{\w}(\w_t)\|^2\right]=O(\epsilon^2)$ for $$T\geq \max \left\{\frac{2(F(\w_{1})-F(\w_{T+1}))}{\eta \epsilon^2},\frac{C_F^2+5C_g^2L_f^2G^2}{\beta\epsilon^2}\right\}.$$

\section{PROOF OF THEOREM 4} 	
Let $\Delta_t=\m_{t+1}-\nabla F(\w_t),$ and $\eta^s_{t} = 1/(\sqrt{\bv_{t+1}} + \delta)$. Based on the boundedness of $\|tilde{\nabla}\|$, we know that $\eta c_l\leq[\eta^s_{t}]_i\leq \eta c_u$ for all $i\in[d]$. 
According to the update rule in Algorithm \ref{alg:Adam}, we have
\begin{equation}
\begin{split}
\label{eqn:appendix:theorem:1:2}
\|\Delta_t\|^2=&\left\|  (1-\beta_1)\m_t+\beta_1 \frac{1}{B}\sum_{\x_i\in\B_t}  \nabla\tilde{g}_{i}(\w_t)^{\top}\cdot \nabla f\left([U_{t+1}]^{\top}_i\right)-\nabla F(\w_t)\right\|^2
\end{split}	
\end{equation}
Following similar procedure as in Appendix \ref{app:tho:3}, we can get
\begin{equation}
\begin{split}
\label{eqn:fin:1}
\E\left[\sum_{i=1}^T \| \Delta_i\|^2\right]\leq {} & \frac{C_F^2+5C_g^2L_f^2G^2}{\beta}+\frac{(2L_F^2+10C_g^4L_f^2)\eta^2c_u^2}{\beta^2}\sum_{i=1}^T\|\m_i\|^2 \\
{} &+	\beta(10C_g^2L_f^2G^2\frac{m}{B}+2C_g^2C_f^2)T.
\end{split}	
\end{equation}
Next, we introduce the following lemma, which is the counterpart of Lemma \ref{Lemma:smooth} for Adam-style algorithms \cite{guo2021stochastic}. 
\begin{lemma}\label{lem:smooth:adam}
For $\w_{t+1} = \w_t- \eta_t^s\cdot \m_{t+1}$ with $\eta c_l\leq [\eta_{t}]_i\leq \eta c_u$ and $\eta L_F\leq c_l/(2c_u^2)$, we have
\begin{align*}
&F(\w_{t+1})  \leq F(\w_t) +   \frac{ \eta c_u}{2}\|\nabla F(\w_t) - \m_{t+1}\|^2- \frac{\eta c_l}{2}\|\nabla F(\w_t)\|^2  - \frac{\eta c_l}{4}\|\m_{t+1}\|^2.
\end{align*}
\end{lemma}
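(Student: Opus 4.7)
The plan is to derive the inequality directly from the $L_F$-smoothness of $F$ (given by Lemma \ref{lem:bounded:function:2}) combined with a coordinate-wise decomposition of the inner product term, using the coordinate-wise bounds on the adaptive step size and the step-size assumption $\eta L_F \le c_l/(2c_u^2)$ to absorb the quadratic remainder. The key point to keep in mind is that $\eta_t^s$ is a vector; the update is the coordinate-wise product $\w_{t+1} = \w_t - \eta_t^s \odot \m_{t+1}$, so sign-aware bounding is required.

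First I would invoke $L_F$-smoothness to write
\begin{equation*}
F(\w_{t+1}) \le F(\w_t) - \sum_{i=1}^d [\eta_t^s]_i [\nabla F(\w_t)]_i [\m_{t+1}]_i + \frac{L_F}{2}\sum_{i=1}^d [\eta_t^s]_i^2 [\m_{t+1}]_i^2.
\end{equation*}
For the linear (cross) term, I would apply the identity $-ab = -\tfrac{1}{2}a^2 - \tfrac{1}{2}b^2 + \tfrac{1}{2}(a-b)^2$ coordinate-wise with $a=[\nabla F(\w_t)]_i$ and $b=[\m_{t+1}]_i$. This produces two negative contributions proportional to $[\nabla F(\w_t)]_i^2$ and $[\m_{t+1}]_i^2$ (which I bound from above using $[\eta_t^s]_i \ge \eta c_l$, making the resulting inequality tighter in the right direction) and one positive remainder $\tfrac{[\eta_t^s]_i}{2}([\nabla F(\w_t)]_i - [\m_{t+1}]_i)^2$ (which I bound using $[\eta_t^s]_i \le \eta c_u$). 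For the quadratic smoothness term I apply $[\eta_t^s]_i \le \eta c_u$ uniformly. Summing over coordinates yields
\begin{equation*}
F(\w_{t+1}) \le F(\w_t) - \frac{\eta c_l}{2}\|\nabla F(\w_t)\|^2 - \frac{\eta c_l}{2}\|\m_{t+1}\|^2 + \frac{\eta c_u}{2}\|\nabla F(\w_t) - \m_{t+1}\|^2 + \frac{L_F \eta^2 c_u^2}{2}\|\m_{t+1}\|^2.
\end{equation*}

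Finally, under the step-size condition $\eta L_F \le c_l/(2c_u^2)$, the coefficient $\tfrac{L_F \eta^2 c_u^2}{2}$ of $\|\m_{t+1}\|^2$ is at most $\tfrac{\eta c_l}{4}$, so it combines with $-\tfrac{\eta c_l}{2}\|\m_{t+1}\|^2$ to leave $-\tfrac{\eta c_l}{4}\|\m_{t+1}\|^2$. This delivers exactly the stated bound. The only subtle step is the sign-aware application of the coordinate-wise bounds on $[\eta_t^s]_i$: the lower bound $\eta c_l$ must be used on the negative $-[\eta_t^s]_i [\nabla F(\w_t)]_i^2/2$ and $-[\eta_t^s]_i [\m_{t+1}]_i^2/2$ terms, and the upper bound $\eta c_u$ on the non-negative quadratic remainder and the smoothness term; mixing these would break the absorbing argument.
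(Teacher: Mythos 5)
Your proof is correct. The paper itself does not prove this lemma --- it only cites \citet{guo2021stochastic} --- so your derivation supplies the argument the paper omits, and it is the standard one: the descent lemma for the coordinate-wise update, the polarization identity $-ab=-\tfrac12 a^2-\tfrac12 b^2+\tfrac12(a-b)^2$ applied per coordinate, and the sign-aware use of $\eta c_l\leq[\eta^s_t]_i\leq\eta c_u$ (lower bound on the negative terms, upper bound on the non-negative remainder and the quadratic term), after which $\eta L_F\leq c_l/(2c_u^2)$ absorbs $\tfrac{L_F\eta^2c_u^2}{2}\|\m_{t+1}\|^2$ into $-\tfrac{\eta c_l}{4}\|\m_{t+1}\|^2$. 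This matches the intended argument exactly; no gaps.
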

Combining \eqref{eqn:fin:1} and Lemma \ref{lem:smooth:adam}, and with $(2L_F^2+10C_g^4L_f^2)\eta^2c_u^3/(c_l\beta^2)\leq 1/2$, we get 
we get 
\begin{equation}
\begin{split}
	\E\left[\frac{1}{T}\sum_{t=1}^T\|\nabla F(\w_t)\|^2\right]\leq {} &\frac{2( F(\w_{1})-F(\w_{T+1}))/c_l}{\eta T}+ \frac{c_u(C_F^2+5C_g^2L_f^2G^2)/c_l}{\beta T}\\
	{} & +\beta c_u(5C_g^2L_f^2G^2\frac{m}{B}+2C_g^2C_f^2)/c_l.
\end{split}	
\end{equation}
Thus, by setting $\beta=O(\epsilon^2B/m)$ and $\eta=O(\beta)$, we have 
$\E\left[\frac{1}{T}\sum_{t=1}^T\|\nabla F_{\w}(\w_t)\|^2\right]=O(\epsilon^2)$ for $$T\geq \max \left\{\frac{2(F(\w_{1})-F(\w_{T+1}))}{c_l\eta \epsilon^2},\frac{c_u(C_F^2+5C_g^2L_f^2G^2)}{c_l\beta\epsilon^2}\right\}.$$
\section{COMPARISONS BETWEEN MOAP-V1 AND MOAP-V2}
\begin{table*}[t]
	\caption{Final averaged AP scores on the testing data.}\label{supp:exp:tab:3} 
	\centering
	\label{tab:1}
	\scalebox{0.96}{\begin{tabular}{l|c|c}
			\toprule
		 Method	&phishing&w6a\\
		\hline 
		ADAP&${ 0.981\pm2}${E-}${7}$&${ 0.675\pm1}${E-}${ 4}$\\
		MOAP-V2&$0.978\pm2$E-$6$&$0.596\pm2$E-$3$\\
		MOAP-V1&$0.972\pm4$E-$6$&$0.608\pm4$E-$4$\\
		\bottomrule
	\end{tabular}
}
\vspace*{0.2in}
\end{table*}
\begin{figure}[t]
\centering  
\subfigure[phishing dataset]{
\label{sFig.sub.3}
\includegraphics[width=0.4\textwidth]{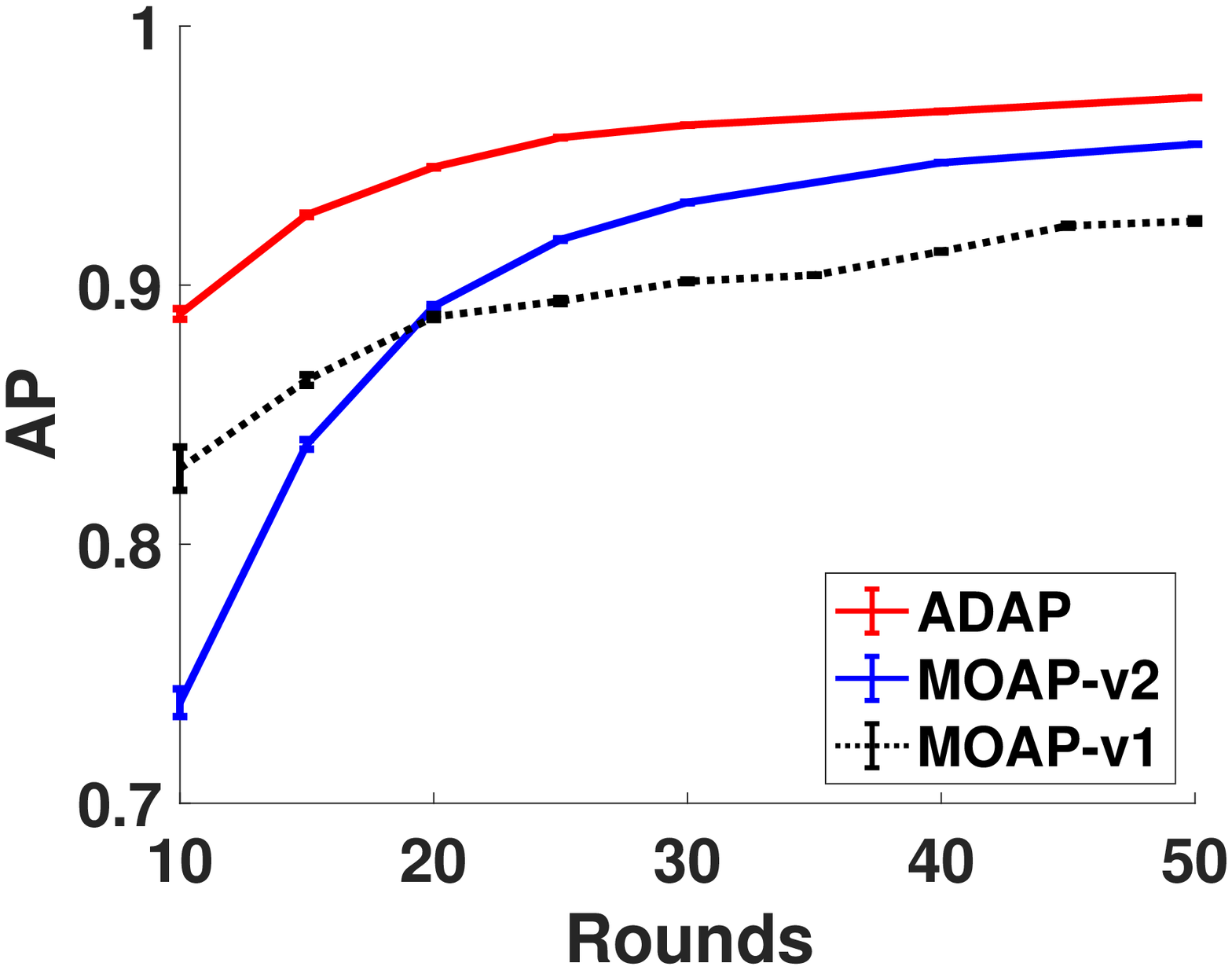}}
\subfigure[w6a dataset]{
\label{sFig.sub.4}
\includegraphics[width=0.4\textwidth]{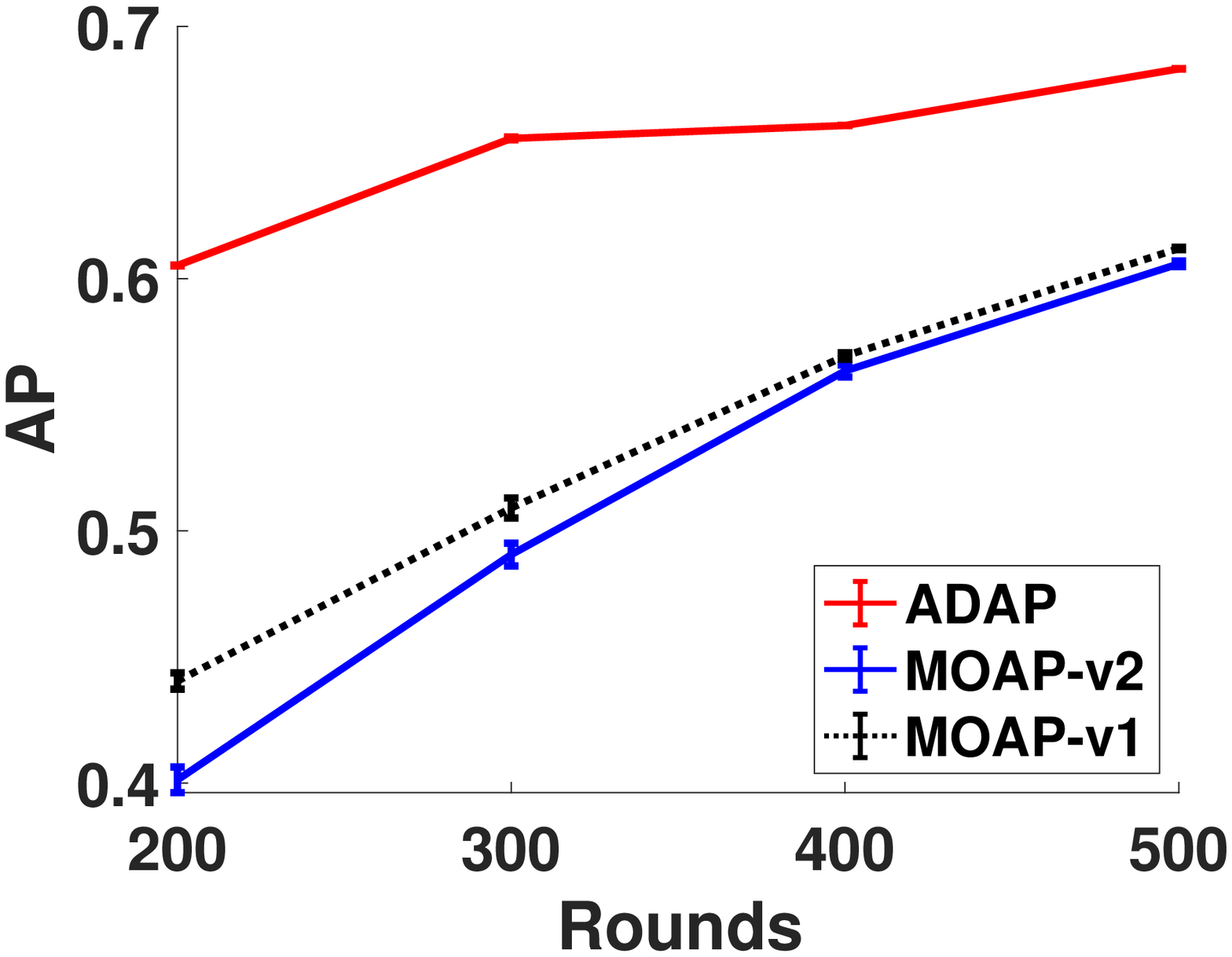}} 
\vspace*{0.2in}
\caption{AP vs \# of rounds  on the training set}
\label{sFig.main}
\end{figure}
\begin{figure}[t]
\centering  
\subfigure[phishing dataset]{
\label{sFig.sub.3}
\includegraphics[width=0.4\textwidth]{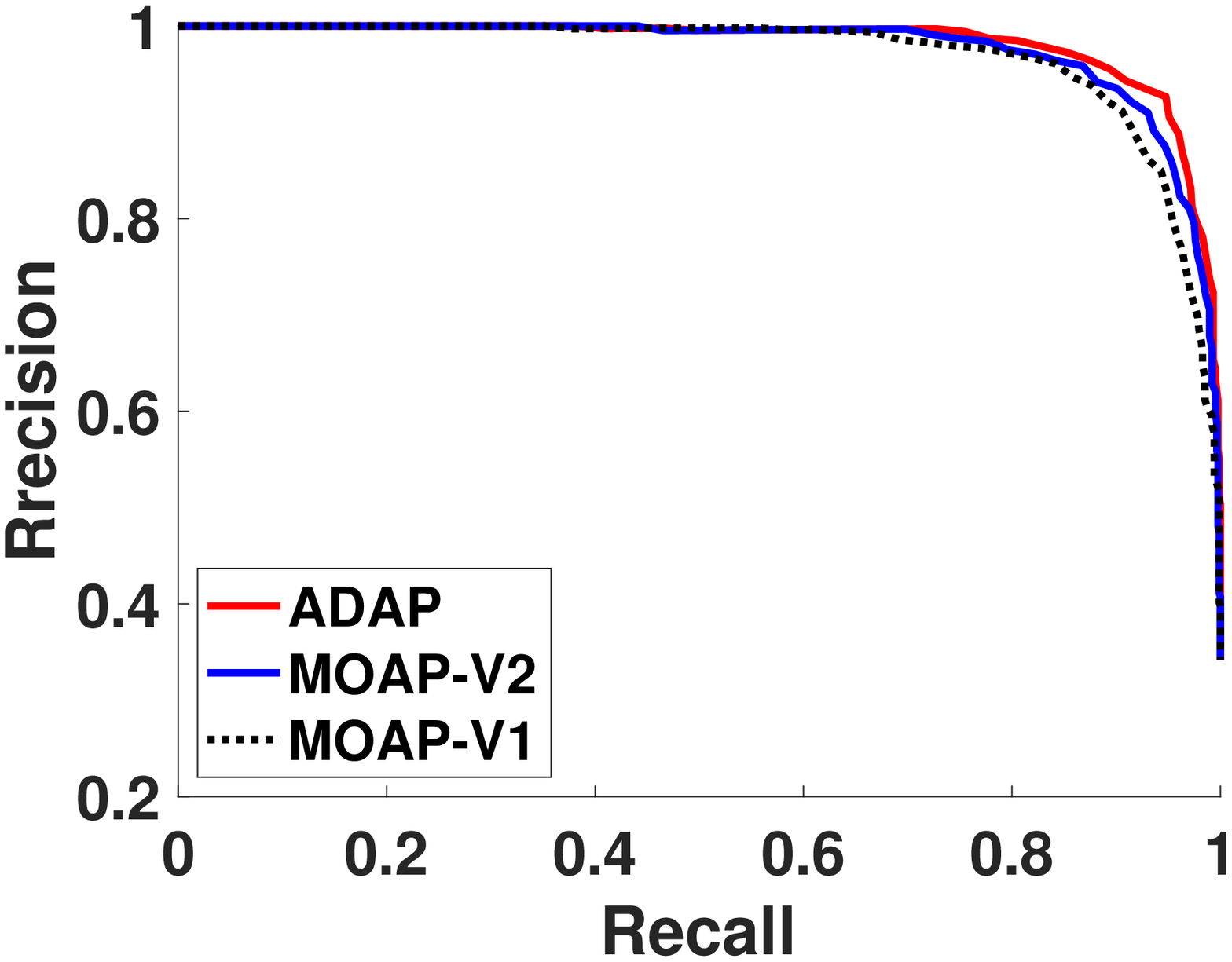}}
\subfigure[w6a dataset]{
\label{sFig.sub.4}
\includegraphics[width=0.4\textwidth]{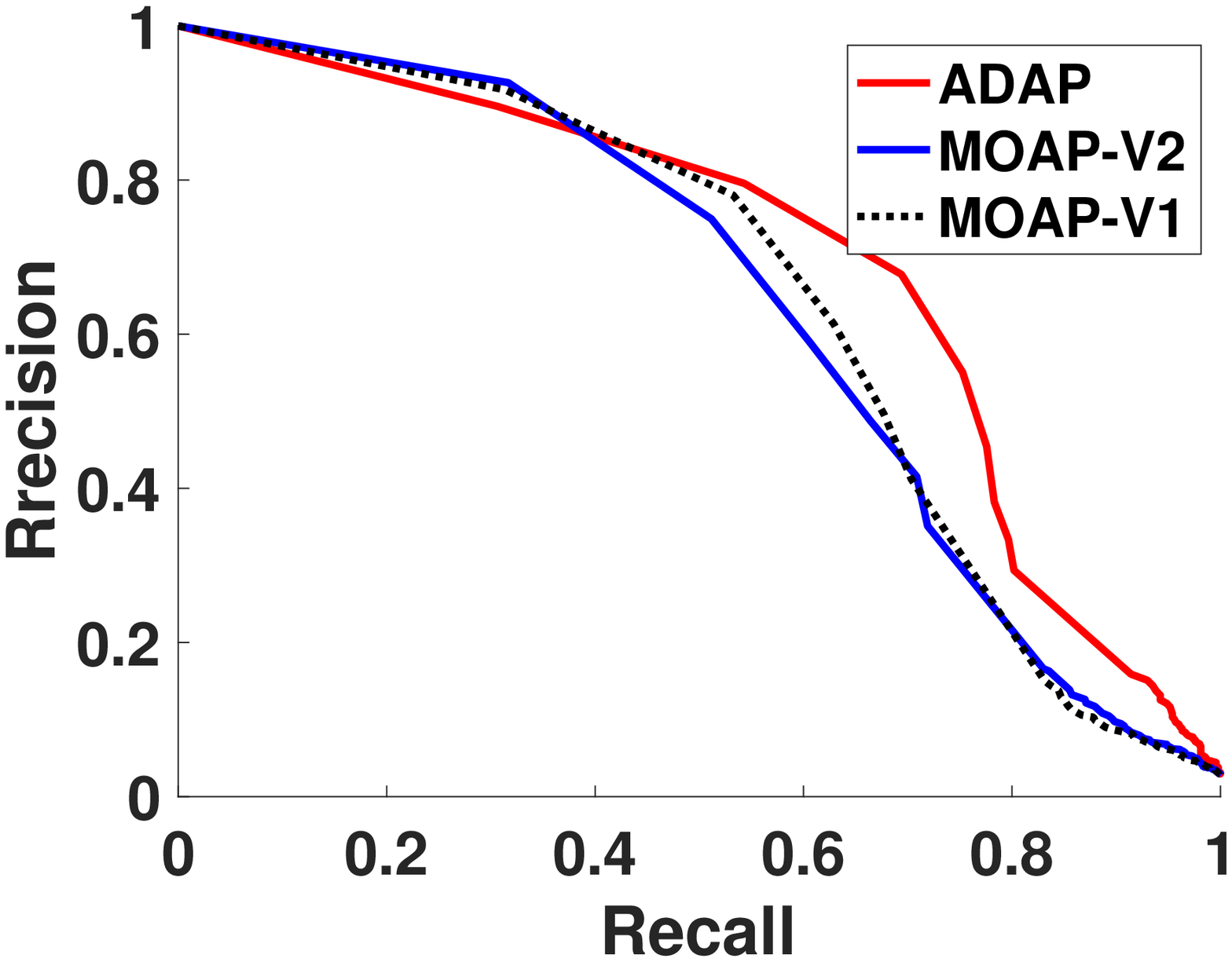}} 
\vspace*{-0.1in}
\caption{Precision-Recall curves of the Final models on the testing set}
\label{sFig.main2}
\end{figure}
In this section, we first discuss the differences between the analysing techniques of MOAP-V1 and MOAP-V2, and then compare their empirical performances on real-world datasets.
\paragraph{Comparison of proofs} We note that, when $B=m$, that is, when $\B_t$ includes all positive data, the proofs of the two algorithms are the same. For $B<m$, the major difference lies in the techniques used for bounding $\sum_{i=1}^T\E_i[\|U_{i+1}-g(\w_{i})\|^2]$, i.e., the variance term of $g(\w)$ estimation. In MOAP-V2, $U_{t+1}$ is the moving average of $\widehat{g}(\w_t)$ (defined in \eqref{eqn:cord}), which is an unbiased estimator of $g(\w_t)$. Thus, we can directly use the variance recursion property (Lemma \ref{lem:variance:recursion}) on $U_{t+1}$ to bound $\sum_{i=1}^T\E_i[\|U_{i+1}-g(\w_{i})\|^2]$ (Eq. \eqref{eqn:appendix:theorem:1:2:3}), and then further combine the property of momentum step (Lemma \ref{Lemma:smooth}) to obtain a tight bound. In contrast, the $U_{t+1}$ in MOAP-V1 is only updated for the sampled data, and thus tracking a biased estimator of $g(\w_t)$. Because of this problem, Lemma \ref{lem:variance:recursion} can not be directly applied on $U_{t+1}$, and we have to bound $\sum_{i=1}^T\E_i[\|U_{i+1}-g(\w_{i})\|^2]$ coordinate-wisely as in \eqref{eqn:th2:356}, using the fact that $\tilde{g}_i(\w_t)$ is an biased estimator of ${g}_i(\w_t)$. However, the dependent issue makes the advantages of the momentum step in Lemma 3 can not be fully exploited, which leads to a worse convergence rate. Whether we can overcome the dependant problem and directly improve the convergence rate of MOAP-V1 is still an open question.

\paragraph{Experiments}
Next, we compare the performances of MOAP-V1 and MOAP-V2 on two real-world datasets, that is, \emph{w6a} and \emph{phishing}. The parameter configurations are the same as the experiments in Section \ref{section:exp}. The convergence curves of AP on training examples are reported in Figure \ref{sFig.main}, and the final AP scores on the testing data are shown in Table \ref{exp:tab:3}. We also plot the Precision-Recall curves of the final models on testing data in Figure~\ref{sFig.main2}. From these results, it can be seen that, although MOAP-V1 currently suffers worse theoretical guarantees, it achieves comparable empirical results as MOAP-V2. 
\end{document}